\def\eqref#1{equation~\ref{#1}}
\def\1{\bm{1}}
\def\rvh{{\mathbf{h}}}
\def\rvm{{\mathbf{m}}}
\def\rvo{{\mathbf{o}}}
\def\rvx{{\mathbf{x}}}
\def\rvy{{\mathbf{y}}}
\def\rvz{{\mathbf{z}}}
\def\rmE{{\mathbf{E}}}
\def\rmG{{\mathbf{G}}}
\def\rmM{{\mathbf{M}}}
\def\rmO{{\mathbf{O}}}
\def\rmU{{\mathbf{U}}}
\DeclareMathAlphabet{\mathsfit}{\encodingdefault}{\sfdefault}{m}{sl}
\SetMathAlphabet{\mathsfit}{bold}{\encodingdefault}{\sfdefault}{bx}{n}
\def\gD{{\mathcal{D}}}
\def\gN{{\mathcal{N}}}
\def\sP{{\mathbb{P}}}
\DeclareMathOperator*{\argmax}{arg\,max}
\newcommand{\xoffset}{-70}
\newcommand{\yoffset}{0}
\title[AnyCQ: Graph Neural Networks for Answering  Queries over Incomplete Knowledge Graphs]{One Model, Any Conjunctive Query: Graph Neural Networks for Answering  Queries over Incomplete Knowledge Graphs}
\author[K. Olejniczak et al.]{
    Krzysztof Olejniczak \\
    University of Oxford \\
    \email{krzysztof.olejniczak@cs.ox.ac.uk}
    \And
    Xingyue Huang  \\
    University of Oxford \\
    \email{\qquad\qquad xingyue.huang@cs.ox.ac.uk\qquad\qquad}
    \AND
    Mikhail Galkin \\
    Intel AI \\
    \email{mikhail.galkin@intel.com}
    \And
    \.Ismail \.Ilkan Ceylan  \\
    TU Wien, AITHYRA, University of Oxford \\
    \email{ismail.ceylan@tuwien.ac.at}
}
\begin{document}

\maketitle

\begin{abstract}
Motivated by the incompleteness of modern knowledge graphs, a new setup for query answering has emerged, where the goal is to predict answers that do not necessarily appear in the knowledge graph, but are present in its \emph{completion}. In this paper, we formally introduce and study two query answering problems, namely, \emph{query answer classification} and \emph{query answer retrieval}.
To solve these problems, we~propose \(\anycq\), a model that can classify answers to~\emph{any} conjunctive query on \emph{any} knowledge graph.
At the core of our framework lies a graph neural network trained using a reinforcement learning objective to answer Boolean queries.
Trained only on simple, small instances, $\anycq$ generalizes to \emph{large queries} of \emph{arbitrary} structure, reliably classifying and retrieving answers to queries that existing approaches fail to handle.
This is empirically validated through our newly proposed, challenging benchmarks.
Finally, we empirically show that $\anycq$ can effectively transfer to \emph{completely novel} knowledge graphs when equipped with an appropriate link prediction model, highlighting its potential for querying incomplete data.
\end{abstract}


\section{Introduction}
Knowledge graphs (KGs) are an integral component of modern information management systems for \emph{storing}, \emph{processing}, and \emph{managing} data.
Informally, a KG is a finite collection of facts representing different relations between pairs of nodes.
Motivated by the incompleteness of modern KGs~\citep{toutanova2015observed}, a~new setup for classical query answering has emerged~\citep{ren2020query2box, betae, gnn-qe, qto, fit, lmpnn, q2t}, where the goal is to predict answers that do not necessarily appear in the KG, but are potentially present in its \emph{completion}.
This task is~commonly referred to as complex query answering (CQA),
and poses a significant challenge, going beyond the capabilities of classical query engines, which typically assume every fact missing from the \emph{observable} KG is incorrect, following \emph{closed-world assumption} \citep{open_world_assumption}.

In its current form, CQA is formulated as a \emph{ranking} problem: given an input query \(Q(x)\) over a KG \(G\), the objective is to rank all possible answers based on their likelihood of being a correct answer.
Unfortunately, this setup suffers from various limitations.
Firstly, this evaluation becomes infeasible for cases where multiple free variables are allowed\footnote{As a result, almost all existing proposals focus on queries with only \emph{one} free variable.}. Moreover, to avoid explicitly enumerating solutions, existing methods need to resort to various heuristics and most of them can only handle tree-like queries \citep{cqd,qto,gnn-qe} or incur an exponential overhead in more general cases~\citep{fit}.
Consequently, the structural oversimplification of queries is also reflected in the existing benchmarks.
We argue for an alternative problem formulation, more aligned with classical setup, to alleviate these problems.

\textbf{Problem setup.} In this work, we deviate from the existing ranking-based setup, and instead propose and study two query answering problems based on classification. Our first task of interest, \emph{query answer classification}, involves classifying solutions to queries over  knowledge graphs, as~\(\mathsf{true}\) or~\(\mathsf{false}\). The second objective, \emph{query answer retrieval}, requires predicting \mbox{a correct answer} to~the~query or~deciding that none exists.

\textbf{Example.} Let us illustrate these tasks on a knowledge graph $G_\text{ex}$ (\Cref{fig:kg_and_query}), representing relationships between actors, movies, and locations. The dashed edges denote the missing facts from $G_\text{ex}$ and we write $\tilde{G}_\text{ex}$ to denote the complete version of $G_\text{ex}$ which additionally includes all missing facts.  Consider the following first-order query: 
\[
Q(x) = \exists y . \text{{Directed}}(y, \text{\textit{``Oppenheimer''}}) \land \text{{BornIn}}(y, x),
\]
which asks about the birthplace of the director of ``Oppenheimer''. 
\begin{itemize}
    \item \textbf{Query answer classification.} An instance of query answer classification is to classify a \emph{given} answer, such as \(x\rightarrow\text{\textit{London}}\), as~\(\mathsf{true}\) or \(\mathsf{false}\) based on the observed graph $G_\text{ex}$. In this case, the~answer \(x\rightarrow\text{\textit{London}}\) should be classified as $\mathsf{true}$, since this is a correct answer to $Q(x)$ in~the complete graph $\tilde{G}_\text{ex}$, whereas any other assignment should be classified as $\mathsf{false}$.
    \item \textbf{Query answer retrieval.} An instance of query answer retrieval is to predict a correct answer to $Q(x)$ based on the observed graph $G_\text{ex}$. In this case, the only correct answer is \(x\rightarrow\text{\textit{London}}\), which should be retrieved as an answer to the query $Q(x)$. If no correct answer exists, then \(\mathsf{None}\)
  should be returned as an answer.
\end{itemize}

\firstkgexample

\textbf{Approach and contributions.} To solve these tasks, we introduce \(\anycq\), a graph neural network that provided with a~function assessing the truth of unobserved links, can predict the satisfiability of a Boolean query over \emph{any} (incomplete) KG.
 \(\anycq\) acts as a search engine exploring the space of assignments to the free and existentially quantified variables in the query, eventually identifying a~satisfying assignment to the query.
$\anycq$ can handle \emph{any} existentially quantified first-order query in conjunctive or disjunctive normal form.
Our contributions can be summarized as follows:
\begin{enumerate}
    \item We extend the classical query answering problems to the domain of incomplete knowledge graphs and formally define the studied tasks of query answer classification and retrieval, introducing challenging benchmarks consisting of formulas with demanding structural complexity.
    \item We propose \(\anycq\), a neuro-symbolic framework for~answering Boolean conjunctive queries over incomplete KGs, which is able to solve existentially quantified queries of \emph{arbitrary} structure.
    \item We demonstrate the strength of \(\anycq\) on the studied objectives through various experiments, illustrating its strength on both benchmarks.
    \item Specifically, we highlight its surprising generalization properties, including transferability between different datasets and ability to extrapolate to very large queries, \emph{far beyond} the processing capabilities of existing query answering approaches.
\end{enumerate}

\section{Related work}
\looseness=-1

\textbf{Link prediction.} 
Earlier models for link prediction (LP) on knowledge graphs, such as TransE~\citep{transe}, RotatE~\citep{sun2019rotate}, ComplEx~\citep{trouillon2016complex}, and BoxE~\citep{abboud2020boxe}, learn fixed embeddings for entities and relations, confining themselves to \emph{transductive} setting. 
Later, graph neural networks (GNNs) emerged as powerful architectures, with prominent examples including RGCNs~\citep{schlichtkrull2018modeling} and CompGCNs~\citep{vashishth2020composition}.
{These models adapt the message-passing paradigm to multi-relational graphs, thus enabling \emph{inductive} link prediction on unseen entities.} 
Building on this, \citet{nbfnet} proposed NBFNets, which achieve strong performance through conditional message passing~\citep{huang2023a}.
Recently, \textsc{ULTRA}~\citep{galkin2023ultra} became one of the first foundation models on LP over both unseen entities and relations.

\textbf{Complex query answering.}
Complex query answering (CQA) \citep{ren2020query2box, betae} 
generalizes link prediction to 
first-order formulas with one free variable, considering queries with conjunctions (\(\land\)), disjunctions (\(\lor\)) and negations (\(\neg\)). 
\emph{Neuro-symbolic} models decompose the CQA task into a series of link prediction problems,
combining results with fuzzy logic.
CQD \citep{cqd} pioneered this approach with beam search over pre-trained embeddings for approximate inference. 
QTO \citep{qto} improved on this by exploiting the sparsity of neural score matrices to compute exact solutions without approximation. FIT~\citep{fit} extended QTO to cyclic queries at a higher cost.
GNN-QE \citep{gnn-qe} trained directly over queries without relying on pre-trained embeddings.
\textsc{UltraQuery} \citep{galkin2024ultraquery} combined GNN-QE’s framework with ULTRA, yielding the first foundation model for CQA with zero-shot generalization.
\emph{Neural} methods generally rely on neural networks to deduce relations and execute logical connectives simultaneously.
CQD-CO \citep{cqd} formulates query answering as continuous optimization, assigning embeddings to variables and optimizing the fuzzy logic with gradient descent.
LMPNN \citep{lmpnn} and CLMPT \citep{clmpt} employ logical message-passing and attention-based aggregation.
Q2T~\citep{q2t} utilized the adjacency matrix of the query graph as an attention mask in Transformers~\citep{attention_is_all} model.
While flexible, these methods lack interpretability and variable grounding, and underperform with growing query graph size.

\textbf{Combinatorial reasoning.}
GNNs have emerged as a powerful tool for solving combinatorial optimization problems \citep{gnn_co_survey}. Their power to leverage the inherent structural information encoded in graph representations of instances has been successfully utilized for solving various combinatorial tasks \citep{joshi, decision_col, hamiltonian_cycles}. As a method of our particular interest, ANYCSP \cite{anycsp}, introduced a novel computational graph representation for arbitrary constraint satisfaction problems (CSP), demonstrating state-of-the-art performance on~MAX-CUT, MAX-\textit{k}-SAT and \textit{k}-COL.

In this work, we cast conjunctive queries as a CSP, tailoring the ANYCSP framework to suit the task of satisfiability of Boolean formulas over incomplete KGs.
\(\anycq\) integrates link predictors to infer missing relations and introduces guidance mechanisms for efficient search over large domains.
Leveraging ANYCSP’s extrapolation and generalization strengths, \(\anycq\) provides an effective solution for query answer classification and retrieval.

\section{Preliminaries}
\label{sec:preliminaries}

\textbf{Knowledge graphs.} A \emph{knowledge graph} (KG) is a set of facts over a relational vocabulary $\sigma$, which is typically represented as a graph $G = (V(G), E(G), R(G))$, where $V(G)$ is the set of~nodes (or vertices), $R(G)$~is the set of~relation types, and $E(G) \subseteq R(G)\times V(G) \times V(G)$ is~the~set of~relational edges (i.e., facts), denoted as $r(u,v) \in E(G)$ with $r \in R(G)$ and $u,v \in V(G)$.
We~write $G \models r(a,b) $ to mean $ r(a,b) \in E(G)$.
We consider each given KG \({G} = (V(G),E({G}), R(G))\) as an \emph{observable part} of a~complete~graph \(\Tilde{G} = (V(G),E(\Tilde{G}), R(G))\) that consists of all true facts between entities in \(V(G)\). 
Under this assumption, reasoning over the known facts \(E(G)\) is insufficient, requiring deducing the~missing edges \(E(\Tilde{G})\backslash E(G)\).
Note that this formulation follows the transductive scenario, in which  \(\Tilde{G}\) covers the same sets of entities and relation types as \(G\).

\textbf{Link predictor.}
We call a \emph{link predictor} for a KG \(G\) a function \(\pi : R(G) \times V(G) \times V(G) \rightarrow [0,1]\), where \(\pi(r,a,b)\) represents the probability of the atom \(r(a,b)\) being a fact in \(E(\Tilde{G})\).
The \emph{perfect link predictor} \(\Tilde{\pi}\) for \(\Tilde{G}\) is defined as $\Tilde{\pi}(r,a,b) = 1$ if $r(a,b) \in E(\Tilde{G})$, and $0$ otherwise.

\textbf{First-order logic.}
A~\emph{term} is either a constant or a variable. A (binary) \emph{atom} is an expression of the form $r(t_1, t_2)$, where $r$ is a binary relation, and $t_1, t_2$ are terms.
A~\emph{fact}, or a \emph{ground atom}, has only constants as terms.
A~\emph{literal} is an atom or its negation.
A variable in a formula is \emph{quantified} (or \emph{bound}) if it is in the scope of \mbox{a quantifier}; otherwise, it is \emph{free}.
A \emph{Boolean formula} has no free variables.
\mbox{A \emph{quantifier-free formula}} does not use quantifiers. 
We write \(\vec{x} = x_1, ..., x_k\) and \(\vec{y} = y_1, ..., y_l \) to represent sequences of~variables \mbox{and \(\Phi(\vec{x},\vec{y})\)} to represent a quantifier-free formula \(\Phi\) using variables from $\{\vec{x},\vec{y}\}$.
Similarly, we write \(\vec{a}\) to represent tuples of constants of the form \mbox{\(\vec{a} = a_1, ..., a_k\)}.
For~a~first-order logic formula \(\Phi(\vec{x})\) with \(k\) free variables, we use \(\Phi(\vec{a}/\vec{x})\) to represent the Boolean formula obtained by substitution of each free occurrence of \(x_i\) for \(a_i\), for all \(i\).

\textbf{Query answering.} \label{conj_query_answering}
A \emph{conjunctive query (CQ)} is a first-order formula of the form \(Q(\vec{x})=\exists\vec{y}\,\Phi(\vec{x},\vec{y})\), where  \(\Phi(\vec{x},\vec{y})\) is a conjunction of literals using variables from \(\{\vec{x},\vec{y}\}\). We reserve \(\{\vec{y}\}\) for existentially quantified variables and \(\{\vec{x}\}\) for free variables. If the query is Boolean, we write \(Q=\exists\vec{y}\,\Phi(\vec{y})\). 
Given a KG \(G\) and a query  \(Q(\vec{x})=\exists\vec{y}\,\Phi(\vec{x},\vec{y})\), the~assignments \(\nu: \{\vec{x}\} \to V(G)\), 
\(\mu: \{\vec{y}\} \to V(G)\) respectively map the \emph{free} and \emph{quantified} variables to constants. 
For notational convenience, we denote with \(\vec{x} \to {\vec{a}}\) the~assignment \({x}_1\to {a}_1\), \ldots, \({x}_k\to {a}_k\). We represent by \(\Phi(\vec{a}/\vec{x},\vec{e}/\vec{y})\) the formula obtained by substituting the~variables with constants according to the assignments \(\vec{x} \to {\vec{a}}\) and \(\vec{y} \to \vec{e}\).
We write \(\nu_{x\rightarrow a}\) for an assignment such that \(\nu_{x\rightarrow a}(x) = a\) and \(\nu_{x\rightarrow a}(z) = \nu(z)\) whenever \(z\neq x\).

A Boolean query  \(Q=\exists\vec{y}\,\Phi(\vec{y})\) evaluates to \(\mathsf{true}\) on \(G\), denoted \(G \models Q\), if there exists an~assignment \(\vec{y} \to \vec{e}\) such that all positive facts that appear in \(\Phi(\vec{e}/\vec{y})\), appear~in~the~set \(E(G)\) and none of the negated facts that appear in \(\Phi(\vec{e}/\vec{y})\) are present in \(E(G)\). 
In~this~case, the assignment \(\vec{y} \to \vec{e}\) is called a \emph{match}. 
For a query \(Q(\vec{x})=\exists\vec{y}\,\Phi(\vec{x},\vec{y})\), an assignment \(\vec{x}\to\vec{a}\) is called an \emph{answer} if \({G} \models Q(\vec{a}/\vec{x})\).
We distinguish between \emph{easy} and \emph{hard} answers. An answer \(\vec{a}\) is \emph{easy} (or \emph{trivial}) if \(G \models Q(\vec{a}/\vec{x})\); it is \emph{hard} (or \emph{non-trivial}) if \(\Tilde{G} \models Q(\vec{a}/\vec{x})\) but \(G \nvDash Q(\vec{a}/\vec{x})\).

\textbf{Query graphs.}
Given a conjunctive query $Q(\vec{x})$, its \emph{query graph} has the terms of $Q(\vec{x})$ as vertices, and the atoms of $Q(\vec{x})$ as~relational~edges.
If the underlying undirected version of the resulting query graph is a tree, we call the query \emph{tree-like}, otherwise, we say it is \emph{cyclic}.

\textbf{Fuzzy logic.} Fuzzy logic extends Boolean Logic by introducing continuous truth values. A formula \(Q\) is assigned a truth value in range $[0,1]$, evaluated recursively on the structure of \(Q\) using \textit{t}-norms and \textit{t}-conorms.
In~particular, G\"odel \textit{t}-norm is defined as \(\top_G(a,b) = \min(a,b)\) with the~corresponding \textit{t}-conorm \(\bot_G(a,b) = \max(a,b)\).
For any existential Boolean formulas $Q$ and $Q'$, the~respective \emph{Boolean formula score} \(S_{\pi, G}\), w.r.t. a link predictor \(\pi\) over a KG \(G\) is then evaluated recursively as:
\[
\begin{array}{cc}
\begin{gathered}
\begin{aligned}
S_{\pi,G}(r(a,b)) &= \pi(r,a,b)&\qquad  \\
S_{\pi,G}(\neg Q) &= 1 - S_{\pi,G}(Q)&\qquad
\end{aligned}
\end{gathered}
&
\begin{gathered}
\begin{aligned}
&S_{\pi,G}(Q \land Q') = \min(S_{\pi,G}(Q),S_{\pi,G}(Q')) \\
&S_{\pi,G}(Q \lor Q') = \max(S_{\pi,G}(Q),S_{\pi,G}(Q')) \\
&S_{\pi,G}(\exists x.\,Q'(x)) = \max_{a\in V(G)} S_{\pi,G}(Q'(a/x))
\end{aligned}\\[-2pt]
\end{gathered}
\end{array}
\]

\section{Query answering on incomplete KGs}
\label{sec:problem_formulation}

Existing problem formulations for complex query answering (CQA) suffer from several fundamental limitations that restrict directions for progress.
First, the standard ranking-based objective is \textbf{computationally infeasible} for queries with \textit{multiple free variables}, as it requires scoring all candidate answers.
Their number is exponential in the number of free variables, which is a known bottleneck for large knowledge graphs \citep{fb15k237, nell, efok_dataset}.
This has also led to reliance on benchmarks with \textbf{limited structural complexity}, featuring either tree-like queries \citep{betae} or ones with a single cycle \citep{fit}, failing to capture the richness of real-world reasoning tasks.
Finally, many ranking-based models are \textbf{not probabilistically calibrated} for binary classification \citep{cqa_classification}. Their scores often result from non-probabilistic training objectives \citep{lmpnn, clmpt} or require ad-hoc transformations \citep{qto, fit}, making them unsuitable for applications that demand definitive true/false decisions. See extended discussion in \Cref{app:limitation}.

\subsection{Query Answer Classification \& Query Answer Retrieval}
In this section, we propose two new query answering tasks designed to provide more targeted responses to complex logical queries, while avoiding the complexity incurred by answer enumeration.

\emph{Query answer classification} reflects real-world scenarios where users seek to verify the correctness of~a~specific answer rather than navigating through a ranked list of possibilities.
It better captures the~nature of many real-world queries, aligning the model’s output with the user’s intent:
\:\\
\begin{mdframed}[style=MyFrame]
\label{problem:qac}
\textsc{Query Answer Classification (QAC)}

\textbf{Input}: A query $Q(\vec{x})$, tuple \(\vec{a}\) and an observed graph $G$.\newline
\textbf{Output}: Does \(\Tilde{G} \models Q(\vec{a}/\vec{x})\) hold?
\end{mdframed}

\newpage

\emph{Query answer retrieval} assesses the correctness of the top-ranked result. By requiring models to~either deliver a correct {assignment to the free variables of the input query} or assert the absence of one, QAR aligns more closely with practical decision-making, ensuring the output is relevant~and~reliable:\\

\begin{mdframed}[style=MyFrame]
\label{problem:qar}
\textsc{Query Answer Retrieval (QAR)}

\textbf{Input}: A query $Q(\vec{x})$ and an observed graph $G$.\newline
\textbf{Output}: $\vec{x}\to \vec{a}$ where $\tilde{G} \models Q(\vec{a}/\vec{x})$ or \(\mathsf{None}\)
\end{mdframed}

\textbf{Reduction to conjunctive query answering.} 
As shown in prior work \citep{lmpnn, fit, clmpt}, any existentially quantified first-order (EFO) query \(Q(\vec{x})\) can be rewritten in disjunctive normal form
\mbox{\(
Q'(\vec{x}) = \exists \vec{y}\,(D_1(\vec{x},\vec{y}) \lor \cdots \lor D_n(\vec{x},\vec{y})),
\)}
where each \(D_i\) is a conjunction of literals. Thus, we have \(Q'(\vec{x}) \equiv Q_1(\vec{x}) \lor \cdots \lor Q_n(\vec{x})\) with
\(Q_i(\vec{x}) = \exists \vec{y}\,D_i(\vec{x},\vec{y})\), and \(Q(\vec{a}/\vec{x})\) is satisfiable if and only if some \(Q_i(\vec{a}/\vec{x})\) is satisfiable, reducing the task to conjunctive queries.
Unlike ranking, which must combine scores across disjuncts, classification aggregates only binary outcomes.
Hence we focus on conjunctive queries, as solving each \(Q_i\) individually introduces no additional complexity.

\section{\anycq: framework for query answering }

{To address the introduced tasks of query answer classification and retrieval}, we propose a neuro-symbolic framework for scoring arbitrary existential Boolean formulas, called \(\anycq\).
Let \(\pi\) be a link predictor for an observable knowledge graph \(G\).
An~\(\anycq\) model \(\Theta\) equipped with \(\pi\) can be viewed as a function
\(
\Theta(G,\pi) : \mathsf{CQ^0}(G) \rightarrow [0,1] 
\)
where \(\mathsf{CQ^0}(G)\) is~the~class of~conjunctive Boolean queries over the same vocabulary as $G$.
For input \(Q = \exists \vec{y} . \Phi(\vec{y})\), \(\Theta\)~searches over the space of assignments to \(\vec{y}\) for
\[
\alpha_{\max} = \argmax_{\alpha : \vec{y} \rightarrow V(G)} S_{\pi,G}(\Phi(\alpha(\vec{y}) / \vec{y})).
\]
and returns an approximation \(\Theta(Q | G, \pi)\) of \(S_{\pi,G}(Q)\) as
\[
\Theta(Q | G, \pi) = \max_{\text{visited } \alpha} S_{\pi,G}(\Phi(\alpha(\vec{y})/\vec{y})) 
\]
Note that by unfolding the Boolean formula score:
\(
    S_{\pi,G}(Q) = S_{\pi,G}(\Phi(\alpha_{\max}(\vec{y})/\vec{y})) \approx \Theta(Q | G, \pi).
\)
Hence, by leveraging the potential of GNNs for solving combinatorial optimization problems, we can recover strong candidates for \(\alpha_{\max}\), allowing for an accurate estimation of \(S_{\pi,G}(Q)\).

\textbf{Overview.} During the search, our method encodes the query \(Q\) and its relation to the current assignment \(\alpha\) into a computational graph \(G_{Q,\alpha}\) (\Cref{sec:computational_graph}).
This graph is then processed with a simple GNN \(\theta\) (whose architecture is described in \Cref{app:architecture}), which updates its hidden embeddings and generates distributions \(\mu\) from which the next assignment \(\alpha'\) is sampled (\Cref{sec:search_process_framework}). 

\subsection{Query representation}
\label{sec:computational_graph}

We transform the input queries into a computational graph (\Cref{fig:computational_graph}), whose structure is adopted from ANYCSP \citep{anycsp}.
Consider a conjunctive Boolean query
\(Q=\exists\vec{y}. \Phi(\vec{y})\) over a knowledge~graph~\(G\), with~\(\Phi\) quantifier-free, and let \(\pi\) be a link predictor for~\(G\). Let \(c_1, ..., c_n\) be constant symbols mentioned in \(\Phi\), and \(\psi_1, ..., \psi_l\) be the~literals in~\(\Phi\). 
We define the domain \(\gD(e)\) of the term \(e\) as \(\gD(y) = V(G)\) for each existentially quantified variable \(y\) and \(\gD(c_i) = \{c_i\}\) for~each~constant~\(c_i\).
Given an assignment \(\alpha: \vec{y}\rightarrow\vec{a}\), the computational graph \(G_{Q,\alpha}\) is constructed as follows:

\figcompgraphremasteredtest

\textbf{Vertices.} 
The vertices of \(G_{Q,\alpha}\) are divided into three groups. Firstly, the \emph{term nodes}, \(v_{y_1}, ..., v_{y_k}\) and \(v_{c_1}, ..., v_{c_n}\), represent variables and constants mentioned in \(\Phi\).
Secondly, \emph{value vertices} correspond to feasible term-value assignments.
Formally, for each term \(e\) mentioned in \(\Phi\) and any value \(a\in\gD(e)\), there exists a value vertex \(v_{e\rightarrow a}\).
Finally, \emph{literal nodes} \(v_{\psi_1}, \dots, v_{\psi_l}\) represent literals  \(\psi_1, ..., \psi_l\) of \(\Phi\).

\textbf{Edges.} We distinguish two types of edges in \(G_{Q,\alpha}\). The~\emph{term-value} edges connect term with value nodes: for any term vertex \(v_e\) representing \(e\) and any \(a\in\gD(e)\), there exists an undirected edge \(\{v_e, v_{e\rightarrow a}\}\).
Additionally, \emph{value-literal} edges are introduced to propagate information within literals. If a literal \(\psi_i\) mentions a term \(e\), then for all \(a\in\gD(e)\) there is an~edge between \(v_{\psi_i}\) and \(v_{e\rightarrow a}\).

\textbf{Edge labels.}
Edge labels embed the predictions of the link predictor \(\pi\) into the computational graph \(G_{Q,\alpha}\) to support guided search. Each value-literal edge connecting a literal vertex \(v_{\psi_i}\) with a~value node \(v_{e\rightarrow a}\) is annotated with the \emph{potential edge} (PE) and the \emph{light edge} (LE) labels.
The~PE label \(P_E(v_{\psi_i}, v_{e\rightarrow a})\) is meant to answer the question: ``Can \(\psi_i\) be satisfied  under the substitution \(e \rightarrow a\)?''.
For example, when \(\psi_2 = s(x,y)\), as in \Cref{fig:computational_graph}, \(P_E(v_{\psi_2}, v_{x,a})\) denotes whether \(\exists y . s(a,y)\) is satisfiable, according to \(\pi\).
We pre-compute the~PE labels using \(\pi\), binarizing the Boolean formula scores of~the~form~\(S_{\pi,G}(\exists y. s(a,y))\) with the threshold \(0.5\). 

In contrast to PE labels, which are independent of the assignment \(\alpha\), light edge (LE) labels reflect how local changes to \(\alpha\) affect satisfiability of the literals.
Formally, we set \(L_E(v_{\psi_i}, v_{e\rightarrow a}; \alpha)=1\) if \(\psi_i\) is satisfied under the assignment \(\alpha_{z\rightarrow a}\), and \(0\) otherwise. In other words, LE labels answer the~question: ``If we change \(\alpha\) so that \(z\) is assigned to \(a\), will \(\psi_i\) be satisfied?''.
Satisfiability is again determined by binarizing the prediction score returned by the link predictor \(\pi\).
Hence, through these edge labels, \(\pi\) effectively guides the search toward promising updates in the assignment space. 

Further explanations of edge labels are provided in \Cref{app:edge_labels}.

\subsection{\(\anycq\) search process}
\label{sec:search_process_framework}
The outline of the search process conducted by \(\anycq\) is presented in \Cref{fig:anycq_overview}.
Before the~search commences, the hidden embeddings \(\mathbf{h}^{(0)}\) of all \emph{value nodes} are set to a~pre-trained vector \(\mathbf{h}\in\mathbb{R}^d\) and an initial assignment \(\alpha^{(0)}\) is~drafted, sampling the value for each variable \(y\in\{\vec{y}\}\) independently and uniformly at random from \(\mathcal{D}(y)\).
The variable and literal nodes are not assigned any hidden embeddings, serving as intermediate steps for value node embedding updates.  
At the beginning of search step \(t\), \(G_{Q,\alpha^{(t-1)}}\) is~processed with a GNN \(\theta\), which generates new value node embeddings \(\mathbf{h}^{(t)}\), and for each variable \(y\in\vec{y}\) returns a~distribution \(\mu_y^{(t)}\) over \(\mathcal{D}(y)\).
Finally, the next assignment \(\alpha^{(t)}\) is sampled by drawing the value \(\alpha^{(t)}(y)\) from \(\mu_y^{(t)}\), independently for each \(y\in\{\vec{y}\}\).
A~precise description of the architecture of \(\theta\) is provided in Appendix \ref{app:architecture}. The search terminates after \(T\) steps,
producing assignments \(\alpha^{(0)}, \alpha^{(1)}, ..., \alpha^{(T)}\) which are used to approximate \(S_{\pi,G}(Q)\) as
\[
\Theta(Q | G, \pi) = \max_{0 \leq t \leq T} S_{\pi,G}\left(
\Phi\left(\alpha^{(t)}(\vec{y}) / \vec{y}\right)
\right)
\]

\subsection{Training}
During training on each dataset, we equip \(\anycq\) with a~predictor \(\pi_\text{train}\), representing the training graph \(G_\text{train}\).
Thus, the only trainable component of \(\Theta\) remains the GNN~\(\theta\). 
We utilize the training splits from the existing CQA datasets \cite{betae}, hence limiting the scope of queries viewed during training to formulas mentioning at most three variables.
Moreover, we restrict the number of search steps $T$ to at most \(15\), encouraging the network to quickly learn to apply logical principles locally.
Inspired by prior work on combinatorial optimization \citep{neuraltsp, anycsp, Abe2019SolvingNP}, we~train \(\theta\) in a~reinforcement learning setting via REINFORCE \cite{reinforce}, treating \(\theta\) as a search policy network with~the~objective of~maximizing \(\Theta(Q | G, \pi)\).
This setup enables \(\anycq\) to generalize across different query types, scaling to formulas of size several times larger than observed during training.
See complete methodology in~\Cref{app:anycq_training}.

\anycqoverviewfigremastered
\subsection{Theoretical and conceptual properties}
\label{sec:anycq_properties}
Our \(\anycq\) framework is supported by strong theoretical guarantees and designed for broad conceptual flexibility. Theoretically, the method is provably \textbf{complete}, ensuring it converges to the correct answer with sufficient search steps (\Cref{thm:completeness}), and it is \textbf{sound} when equipped with a perfect link predictor, guaranteeing the correctness of its positive predictions (\Cref{thm:soundness}). Conceptually, \(\anycq\) is built for \textbf{transferability} and \textbf{generality}. Because its core model is independent of the input graph and link predictor, it can be seamlessly applied to unseen knowledge graphs, as we have shown in \Cref{sec:ablation_studies}. Furthermore, its general design allows it to handle relations of any arity and process complex formulas in conjunctive or disjunctive normal form (\Cref{app:scope_of_formulas}), even extending to inductive settings with the appropriate predictor (\Cref{app:choice_of_predictor}).

\section{Experimental evaluation}
We empirically evaluate \(\anycq\) to assess its performance on the proposed tasks of Query Answer Classification (QAC) and Query Answer Retrieval (QAR). To provide a comprehensive analysis, we aim to answer the following key questions:
\begin{itemize}
\item[\textbf{Q1}.] How does \(\anycq\) perform on QAC and QAR comparing with existing models? (\Cref{sec:qac_results,sec:qar_results})
\item[\textbf{Q2}.] How does \(\anycq\) perform outside the training domain? (\Cref{sec:outside_training_domain})
\item[\textbf{Q3}.] How does \(\anycq\) perform with a perfect link predictor? (\Cref{sec:ablation_studies})
\item[\textbf{Q4}.] How does the choice of link predictor affect \(\anycq\)'s overall performance? (\Cref{sec:used_complex_lp})
\item[\textbf{Q5}.] How does \(\anycq\)'s performance scale with increasing query complexity? (\Cref{app:extended_evaluation})
\end{itemize}

\subsection{Experimental setup}
\label{sec:experiment_setup}

\textbf{Benchmarks and datasets.} 
Existing CQA benchmarks \citep{betae, efok_dataset} contain mostly simple query structures, which impair development of more advanced and general methods.
To close this gap, we generate new datasets on top of standard benchmarks, introducing queries of higher structural complexity. These formulas may involve up to 20 distinct terms and feature multiple cycles, non-anchored leaves, long-range dependencies, and multi-way conjunctions. See \Cref{app:dataset_generation} for generation details.

For \textbf{QAC}, we propose FB15k-237-QAC and~\mbox{NELL-QAC}, each divided into 9 splits, consisting of small and large formulas.
We focus exclusively on single-variable instances, as multi-variable cases reduce trivially to the single-variable setting, i.e., \(\langle Q(x_1,x_2), (a_1, a_2), G\rangle\) is equivalent to a single-variable instance \(\langle Q(x_1, a_2/x_2), a_1, G\rangle\) as they both ask if \( \Tilde{G} \models Q(a_1/x_1, a_2/x_2)\).

For \textbf{QAR}, we observe that many instances of the simpler query structures, inherited from existing CQA benchmarks, admit easy answers, i.e.~have at least one satisfying assignment supported entirely by observed facts. Combined with their limited structural complexity, this makes them trivial under the QAR objective, which only requires recovering a~single correct answer. To evaluate reasoning under incompleteness and structural difficulty, we introduce new benchmarks: FB15k-237-QAR and NELL-QAR, consisting of large formulas with up to three free variables.

\textbf{Baselines.} As the baselines for the small-query split on our QAC task, we choose the state-of-the-art solutions from CQA capable of handling the classification objective: QTO \citep{qto}, FIT \citep{fit}, GNN-QE \citep{gnn-qe} and \textsc{UltraQuery} \citep{galkin2024ultraquery}.
Considering the large-query splits, we notice that no existing approaches can be applied in this setting, as none of them can simultaneously: 
1) efficiently handle \textbf{cyclic} queries and 2) produce \textbf{calibrated} probability estimates, without the knowledge of the trivial answers\footnote{See \Cref{app:baseline_discussion} for a detailed explanation.}.
Hence, we furthermore use an SQL engine, implemented by DuckDB \citep{duckdb}, reasoning over the observable graph.
{For~the~same reasons, extended by the need of reasoning over queries with multiple variables, we consider \emph{only} the SQL engine as the baseline for QAR experiments.}
In both cases, we limit the processing time to 60 seconds, ensuring termination in a reasonable time.
Additional evaluations ablating the impact of this timeout, using 30, 60, and 120 seconds thresholds, are included in \Cref{app:extended_evaluation}.
Training details for the considered baselines are provided in \Cref{sec:used_complex_lp}.

\textbf{Methodology.} Given a Boolean query \(Q\) over an observable KG \(G\), an \(\anycq\) model~\(\Theta\) equipped with a link predictor \(\pi\) for \(G\) can decide if \(\Tilde{G} \models Q\), by returning whether \(\Theta(Q | G, \pi) > 0.5\).
We~use this functionality to solve QAC instances by applying our \(\anycq\) models directly to \(Q(\vec{a}/\vec{x})\).
For~the~QAR task, given a query \(Q(\vec{x})\) over an observable KG \(G\), we run our \(\anycq\) framework on~the~Boolean formula \(\exists \vec{x} . Q(\vec{x})\), returning \(\mathsf{None}\) if the returned \(\Theta(\exists \vec{x} . Q(\vec{x}) | G, \pi)\) was less than \(0.5\).
Otherwise, we return \(\alpha(\vec{x})\) where \(\alpha\) is the visited assignment maximizing the Boolean formula score.
In both scenarios, we perform 200 search steps on each input instance in the large query splits, and just 20 steps for small QAC queries.
We equip NBFNet as the default link predictor for both QAC and QAR evaluations (details in \Cref{app:choice_of_predictor}).

\textbf{Metrics.} Given the classification nature of both our objectives, we use the 
F1-score 
as the metric for~query answer classification and retrieval (see \Cref{app:dataset_evaluation_methodology} for details). 
In QAR, we mark a~positive solution as correct only if the returned assignment is an answer to the input query.
In~contrast to~the~CQA evaluation, we also include easy answer (instances), since the~task of~efficiently answering queries with advanced structural complexity, even admitting answers in~the~observable knowledge graph, is not trivial.
The code can be found in \href{https://github.com/kolejnyy/ANYCQ}{this GitHub repository}.

\begin{table*}[t]
  \centering
  \vspace{-1em}
  \scriptsize
    \caption{Average F1-scores of considered methods on the QAC task.}
  \begin{tabular}{ccccccccccc}
    \toprule  
     \textbf{Dataset}&\textbf{Model} & \textbf{2p} & \textbf{3p} & \textbf{pi} & \textbf{ip} & \textbf{inp} & \textbf{pin} & \textbf{3-hub} & \textbf{4-hub} & \textbf{5-hub} \\
    \midrule
    \multirow{6}{*}{\textbf{FB15k-237-QAC}}
    &\textsc{SQL} & 66.0 & 61.7 & 70.0 & 67.0 & 78.1 & 74.8 & 37.0 & 32.2 & 35.3 \\
    &{QTO} & 67.1 & 64.4 & 70.8 & 67.7 & 78.5 & 75.9 & -- & -- & -- \\
    &{FIT} & 68.0 & 65.1 & 71.4 & 67.8 & 78.6 & 76.7 & -- & -- & --  \\
    &GNN-QE & \textbf{77.1} & \textbf{73.5} & 80.1 & \textbf{81.2} & \textbf{79.0} & 77.0 & -- & -- & -- \\
    &\textsc{UltraQuery} & 75.2 & 68.9 & 79.8 & 76.8 & 75.9 &\textbf{78.6} & -- & -- & -- \\
    &\(\anycq\)\:\: & 75.8 & 71.3 & \textbf{82.1} & 78.8 & 76.7 & 75.7 & \textbf{52.4} & \textbf{49.9} & \textbf{51.9} \\
    
    \midrule 
    \multirow{6}{*}{\textbf{NELL-QAC}}
    &\textsc{SQL} & 60.9 & 58.8 & 63.3 & 59.6 & \textbf{76.7} & 74.9 & 33.9 & 31.4 & 27.0 \\
    & {QTO} & 63.9 & 64.1 & 68.2 & 61.7 & 74.5 & 75.3 & -- & -- & -- \\
    &{FIT} & 63.9 & 64.6 & 68.4 & 61.7 & 73.6 & \textbf{75.7} & -- & -- & -- \\
    &{GNN-QE} & 70.4 & 69.7 & 71.2 & 72.1 & 72.2 & 74.9 & -- & -- & -- \\
    &{\textsc{UltraQuery}} & 66.3 & 65.6 & 73.2 & 71.1 & 73.2 & 73.4 & -- & -- & -- \\
    &\(\anycq\)\:\: & \textbf{76.2} & \textbf{72.3} & \textbf{79.0} & \textbf{75.4} & \textbf{76.7} & 75.3 & \textbf{57.2} & \textbf{52.6} & \textbf{58.2} \\
    \bottomrule
  \end{tabular}
  \label{tab:qac_results}
\end{table*}

\begin{table}[t]
  \centering
  \scriptsize
  \caption{F1-scores on QAR samples, where $k$ is the number of free variables.}
  \setlength{\tabcolsep}{4pt}
  \begin{tabular}{lccccccccccccc}
    \toprule  
    \multirow{2}{*}{\textbf{Dataset}} & \multirow{2}{*}{\textbf{Model}} & \multicolumn{4}{c}{\textbf{3-hub}} & \multicolumn{4}{c}{\textbf{4-hub}} & \multicolumn{4}{c}{\textbf{5-hub}} \\
    \cmidrule(lr){3-6}
    \cmidrule(lr){7-10}
    \cmidrule(lr){11-14}
     & & \(k\!=\!1\) & \(k\!=\!2\) & \(k\!=\!3\) & \textbf{total} & \(k\!=\!1\) & \(k\!=\!2\) & \(k\!=\!3\) & \textbf{total} & \(k\!=\!1\) & \(k\!=\!2\) & \(k\!=\!3\) & \textbf{total} \\
     \midrule
     \multirow{2}{*}{\textbf{FB15k-237-QAR}}
        & \textsc{SQL} & 65.8 & 46.2 & 17.8 & 45.7 & 59.9 & 50.2 & 33.7 & 48.7 & 60.6 & 49.3 & 42.5 & 51.2 \\
        & \(\anycq\)   & \textbf{67.8} & \textbf{62.3} & \textbf{50.2} & \textbf{60.5} & \textbf{60.4} & \textbf{54.0} & \textbf{48.2} & \textbf{54.5} & \textbf{63.0} & \textbf{56.9} & \textbf{43.1} & \textbf{54.8} \\
     \midrule
     \multirow{2}{*}{\textbf{NELL-QAR}}
        & \textsc{SQL} & 63.5 & 41.3 & 24.0 & 46.7 & 60.6 & 42.1 & 32.9 & 47.7 & 52.7 & 42.5 & 27.6 & 42.8 \\
        & \(\anycq\)   & \textbf{66.7} & \textbf{55.1} & \textbf{39.1} & \textbf{55.8} & \textbf{65.1} & \textbf{57.1} & \textbf{46.5} & \textbf{57.6} & \textbf{58.7} & \textbf{51.1} & \textbf{39.6} & \textbf{51.1} \\
     \bottomrule
  \end{tabular}
  \label{tab:qar_f1scores}
\end{table}

\subsection{Main experiments results over QAC and QAR}
\paragraph{Query answer classification (QAC) experiments}
\label{sec:qac_results}
The results of evaluation on the introduced QAC benchmarks are presented in \Cref{tab:qac_results}.
As expected, GNN-QE and \textsc{UltraQuery} outperform ComplEx-based FIT and QTO, with GNN-QE displaying the best scores out of the considered baselines.
Equipped with the same NBFNet predictors, \(\anycq\) matches its performance, achieving only marginally (within \(3\%\) relative) lower F1-scores on FB15k-237-QAC, and leading by far on NELL-QAC evaluations. 
Importantly, \(\anycq\) successfully extrapolates to formulas beyond the processing power of the existing CQA approaches. On all proposed large query splits \(\anycq\) consistently outperforms the SQL baseline: SQL classifies \emph{only} easy answers accurately, mapping all the hard answers to false, and as a result falls behind \(\anycq\).

\paragraph{Query answer retrieval (QAR) experiments}
\label{sec:qar_results}
We present the QAR evaluation results across all splits of the two proposed datasets consisting of large formulas with multiple free variables in \Cref{tab:qar_f1scores}.
The performance of the SQL engine degrades as the number of free variables in the input query increases.
While a similar behavior can be witnessed for \(\anycq\) models, it progresses at a much slower rate.
Furthermore, \(\anycq\) is capable of finding non-trivial answers, even to complicated queries.
As a consequence, \(\anycq\) consistently outperforms SQL on all splits, with the biggest differences being witnessed for queries involving more than one free variable.

A further analysis (detailed in \Cref{app:extended_evaluation}) shows that \(\anycq\) does not fall behind SQL on instances admitting observable answers, remaining  within \(10\%\) relative to the classical engine on unary queries, while outperforming it on multivariate splits.
Moreover, \(\anycq\) correctly solves a~fair share of hard instances, demonstrating its ability to retrieve unobserved yet correct answers, even for large, structurally complex queries with multiple free variables.

\begin{table*}[t]
  \centering
    \vspace{-1em}
  \small
  \caption{F1-scores of \(\anycq\) models applied outside the training knowledge graph domain.}
  \begin{tabular}{cccccccc}
    \toprule    
    \multicolumn{2}{c}{\(\anycq\) \textbf{specification}} & \multicolumn{3}{c}{\textbf{FB15k-237-QAR}} & \multicolumn{3}{c}{\textbf{NELL-QAR}} \\
    \midrule
    \textbf{Predictor type} & \textbf{Training dataset} & \textbf{3-hub} & \textbf{4-hub} & \textbf{5-hub} & \textbf{3-hub} & \textbf{4-hub} & \textbf{5-hub} \\
    \midrule
    {NBFNet-based,} & FB15k-237  & 60.5 & 54.5 & 54.8 & 55.9 & 58.7 & 49.4 \\
    pre-trained on \(G\) & NELL  & 58.8 & 53.5 & 52.6 & 55.8 & 57.6 & 51.1 \\
    \midrule
    \multirow{2}{*}{perfect \(\Tilde{\pi}\) for \(\Tilde{G}\)} & FB15k-237 & 94.4 & 93.4 & 93.0 & 95.5 & 96.4 & 96.2 \\
     & NELL  & 92.2 & 90.4 & 90.2 & 94.5 & 95.7 & 94.9 \\
    \bottomrule
  \end{tabular}
  \label{tab:transferability}
\end{table*}
\begin{table*}[t]
  \centering
  
  \small
  \caption{F1-scores of \(\anycq\) model equipped with a perfect link predictor on the QAC task.}
  \begin{tabular}{cccccccccc}
    \toprule  
     \textbf{Dataset} & \textbf{2p} & \textbf{3p} & \textbf{pi} & \textbf{ip} & \textbf{inp} & \textbf{pin} & \textbf{3-hub} & \textbf{4-hub} & \textbf{5-hub} \\
    \midrule
    \textbf{FB15k-237-QAC} & 100 & 99.9 & 100 & 100 & 100 & 100 & 92.4 & 91.4 & 93.8  \\
    \textbf{NELL-QAC} & 100 & 100 & 100 & 100 & 100 & 100 & 93.0 & 89.4 & 91.3 \\
    \bottomrule
  \end{tabular}
  \label{tab:qac_perf_results}
    \vspace{-1em}
\end{table*}
\subsection{Ablation studies}
\label{sec:ablation_studies}

\paragraph{How does $\anycq$ perform outside the training domain?}\label{sec:outside_training_domain}
As mentioned in \Cref{sec:anycq_properties}, we expect the search engine to exhibit similar behavior on processed instances, regardless of the underlying knowledge graph.
We validate this claim by applying \(\anycq\) models trained on FB15k-237 or on NELL to both datasets, equipping a relevant link predictor.
The results on our QAR and QAC benchmarks are presented in \Cref{tab:transferability,tab:qac_perf_results}. respectively.
Notably, the differences between models' accuracies in QAR are marginal, confirming that the resulting search engine is versatile and not dataset-dependent.
In fact, the model trained on FB15k-237 exhibits better performance on \emph{both} datasets, further aligning with our assumption on the transferability and generalizability of \(\anycq\).

\paragraph{How does \(\anycq\) perform with a prefect link predictor?} \label{sec:perfect_lp_qar}
The \(\anycq\) framework's performance heavily depends on the underlying link prediction model, responsible for guiding the search and determining the satisfiability of generated assignments.
Hence, to assess purely the quality of our search engines, we equipped them with perfect link predictors for the test KGs, eliminating the impact of predictors' imperfections.
The results of~experiments on our QAR and QAC benchmarks are available in \Cref{tab:transferability} and \Cref{tab:qac_perf_results}, respectively. 
The performance of \(\anycq\) on all QAR splits exceeds 90\%, displaying great accuracy of the GNN-guided search engine.
We witness similar results on the large formula splits in QAC datasets, further confirming the relevance of our model and highlighting the impact of the equipped link predictor. 
Remarkably, the simple query types in QAC pose no challenge for \(\anycq\), which achieves \(100\%\) F1-score on all of them, with a single exception. 

\section{Summary, limitations, and outlook}
In this work, we devise and study two new tasks from the query answering domain: query answer classification and query answer retrieval.
Our formulations target the challenge of classifying and generating answers to structurally complex formulas with an arbitrary number of free variables. Moreover, we introduce datasets consisting of instances beyond the processing capabilities of existing approaches, creating strong benchmarks for years to come.
To address this demanding setting, we~introduce \(\anycq\), a framework applicable 
for scoring and generating answers for large conjunctive formulas with arbitrary arity over incomplete knowledge graphs.
We demonstrate the effectiveness over our QAC and QAR benchmarks, showing that on simple samples, \(\anycq\) matches the performance of state-of-the-art CQA models, while setting challenging baselines for the large instance splits. 
One potential limitation is considering by default the input query in disjunctive normal form, converting to which may require exponentially many operations. 
We hope our work will motivate the~field of query answering to expand the scope of CQA to previously intractable cases and recognize the classification nature of the induced tasks.

\section*{Acknowledgements}
The authors would like to acknowledge the use of the University of Oxford Advanced Research Computing (ARC) facility in carrying out this work. \href{https://doi.org/10.5281/zenodo.22558}{https://doi.org/10.5281/zenodo.22558}

\bibliographystyle{unsrtnat}
\bibliography{refs}

\appendix

\section{Extended discussion of task formulations and baselines}

\subsection{Limitations of existing problem formulations} 
\label{app:limitation}

\textbf{Intractability of high-arity query evaluation.} The objective of complex query answering is to~rank all possible answers to a given logical formula.
Already for queries \(Q(x_1, x_2)\) with two free variables, this entails scoring  \(|V(G)|^2\) pairs of entities \((a_1, a_2) \in V(G)^2\), which is computationally infeasible for modern knowledge graphs \citep{fb15k237, nell} containing thousands of nodes.
As a result, most of~the~existing approaches are not designed to handle higher arity queries, either resolving to inefficient enumeration strategies \citep{fit} or approximating answers by marginal predictions.
This scalability bottleneck has already been observed by \cite{efok_dataset}, who suggested more tractable evaluation methodologies, yet again being only marginal approximations of the true performance.
Therefore, we argue that the ranking-based formulation has significantly limited the progress in query answering over formulas with multiple free variables.

\textbf{Limited structural complexity in existing benchmarks.} A related limitation lies in the structural simplicity of existing benchmarks \citep{betae}. Standard CQA literature predominantly focuses on tree-like queries, which aligns with the capabilities of most current models.
More recently, \citet{efok_dataset} introduced a dataset containing cyclic queries and queries with up to two free variables; however, the overall structures remained constrained -- featuring at most four variables and a single cycle.
We argue that addressing structurally richer queries is essential for advancing automated reasoning systems.
In real-world applications to autonomous systems, such as an AI trip planner that simultaneously books flights, accommodations, and activities while satisfying budget and availability constraints, the~underlying reasoning involves large, complex queries with multiple variables.
As AI agents become more capable, the complexity of the queries they must resolve is only expected to increase, requiring more expressive answering engines, e.g. applicable to large, cyclic, multi-variate formulas.

\textbf{Lack of probabilistic calibration in ranking-based methods.} Practical applications often demand binary decisions - answering questions like ``Is X true?” or ``What is the correct answer to Y?”, requiring models to classify candidate solutions as either 
\true\ or \false\ \citep{cqa_classification}. However, many ranking-based CQA methods do not natively support this decision-making paradigm, as they focus on ordering candidates without enforcing a meaningful threshold to distinguish valid answers from incorrect ones. While many of these models are trained using classification losses, such training does not guarantee the output scores correspond to calibrated satisfiability probabilities. In fact, several approaches rely on Noisy Contrastive Estimation \citep{lmpnn, clmpt} or apply ad hoc score-to-probability transformations \citep{qto, fit}, further weakening the reliability of predicted scores in downstream tasks.

\subsection{Limitations of existing baselines}
\label{app:baseline_discussion}

As mentioned in \Cref{sec:experiment_setup}, for the large-query splits of both QAC and QAR, none of the existing approaches are directly applicable. In particular, no method can simultaneously (1) efficiently handle \textbf{cyclic} queries and (2) provide \textbf{calibrated} probability estimates without relying on knowledge of trivial answers.

Standard methods like BetaE \citep{betae}, CQD \citep{cqd}, ConE \citep{cone}, GNN-QE \citep{gnn-qe} or QTO \citep{qto} are limited to tree-like queries. Neural approaches, such as LMPNN \citep{lmpnn} or CLMPT \citep{clmpt}, are trained using Noisy Contrastive Estimation; hence, their predictions do not meaningfully translate to desired probabilities.
Finally, FIT \citep{fit} and Q2T \citep{q2t} require transforming scores predicted by their ComplEx-based link predictors, while all known schemes (see \Cref{app:predictor_complex}) assume the set of easy answers is known, or otherwise, trivial to recover.

\section{\anycq\ details}
\label{app:anycq_details}
\subsection{Architecture}
\label{app:architecture}
\(\anycq\)'s architecture is based on the original ANYCSP \cite{anycsp} framework. The trainable components of the \(\anycq\) GNN model \(\theta\) are:
\begin{itemize}
    \item a GRU \cite{gru} cell \(\mathbf{G} : \mathbb{R}^d \times \mathbb{R}^d \rightarrow \mathbb{R}^d\) with a trainable initial state \(\mathbf{h} \in \mathbb{R}^d\)
    \item a Multi Layer Perceptron (MLP) value encoder \(\mathbf{E} : \mathbb{R}^{d+1} \rightarrow \mathbb{R}^d\)
    \item two MLPs \(\rmM_V, \rmM_R : \mathbb{R}^d \rightarrow \mathbb{R}^{4d}\) sending information between value and literal vertices
    \item three MLPs $\rmU_V, \rmU_R, \rmU_X : \mathbb{R}^d \rightarrow \mathbb{R}^d$ aggregating value, literal and variable messages
    \item an MLP \(\mathbf{O} : \mathbb{R}^d \rightarrow \mathbb{R}\) that generates logit scores for all variable nodes. 
\end{itemize}

We denote the set of neighbors of term and literal nodes by \(\gN(\cdot)\). In the case of value nodes, we~distinguish between the corresponding term node and the set of connected literal vertices, which we represent by \(\gN_R(\cdot)\).

The model starts by sampling an initial assignment \(\alpha^{(0)}\), where the value of each variable is chosen uniformly at random from \(V(G)\), and proceeds for \(T\) search steps. In step \(t\):
\begin{itemize}

\item If \(t=1\), initialize the hidden state of each value node to be \(\rvh^{(0)}(v_{z\rightarrow a}) = \rvh\).

\item Generate light edge labels under the assignment \(\alpha^{(t-1)}\) for all value-literal edges. Precisely, let \(v_{\psi_i}\) be a literal node corresponding to an atomic formula \(\psi\) and \(v_{z\rightarrow a}\) be a~connected value node. The light edge label \(L_E^{(t-1)}\left(v_{\psi_i}, v_{z\rightarrow a}; \alpha\right)\) is a binary answer to the question: ``Is $\psi$ satisfied under \(\left[\alpha^{(t-1)}\right]_{z \rightarrow a}\)?'' with respect to the equipped predictor.

\item For each value node \(v_{z\rightarrow a}\), generate its new latent state
\[\rvx^{(t)}(v_{z\rightarrow a}) = \rmE\left(\left[ \rvh^{(t-1)}(v_{z\rightarrow a}), \delta_{\alpha(x)=v} \right]\right)\]
where \([\cdot,\cdot]\) denotes concatenation and \(\delta_C=1\) if the condition \(C\) holds, and \(0\) otherwise. 
\item Derive the messages to be sent to the constraint nodes:
\[
\rvm^{(t)}(v_{z\rightarrow a}, 0), ... , \rvm^{(t)}(v_{z\rightarrow a}, 3) = \rmM_V \left(\rvx^{(t)}(v_{z\rightarrow a})\right)
\]
\item For each literal node \(v_{\psi}\), gather the messages from its value neighbors, considering \mbox{the light} and potential labels: 
\[\rvy^{(t)}(v_{\psi}) = \bigoplus_{v_{z\rightarrow a} \in \gN(v_{\psi})} \rvm^{(t)}\left(v_{z\rightarrow a}, 2\cdot P_E(v_\psi,v_{z\rightarrow a}) + L_E^{(t-1)}(v_\psi,v_{z\rightarrow a}; \alpha)\right)\]
where \(\bigoplus\) denotes element-wise \(\max\).

\item The messages to be sent to the value nodes are then evaluated as: 
\[
\rvm^{(t)}(v_\psi, 0), ... , \rvm^{(t)}(v_\psi, 3) = \rmM_R \left(\rvy^{(t)}(v_\psi)\right)\]

\item Aggregate the messages in each value node \(v_{z\rightarrow a}\): \[
\rvy^{(t)}(v_{z\rightarrow a}) = \bigoplus_{v_\psi \in \mathcal{N_R}(v_{z\rightarrow a})} \rvm^{(t)}\left(v_{z\rightarrow a}, 2\cdot P_E(v_\psi,v_{z\rightarrow a}) + L_E^{(t-1)}(v_\psi,v_{z\rightarrow a}; \alpha)\right)
\]
and integrate them with current hidden state: \[\rvz^{(t)}(v_{z\rightarrow a}) = \rmU_V \left(\rvx^{(t)}(v_{z\rightarrow a}) + \rvy^{(t)}(v_{z\rightarrow a})\right) + \rvx^{(t)}(v_{z\rightarrow a})\]

\item For each term node \(v_z\), aggregate the states of the corresponding value nodes:
\[
\rvz^{(t)}(v_z) = \rmU_X \left( \bigoplus_{v_{z\rightarrow a}\in\mathcal{N}(v_z)} \rvz^{(t)}(v_{z\rightarrow a})\right)
\]

\item For each value node \(v_{z\rightarrow a}\), update its hidden state as: 
\[
\rvh^{(t)}(v_{z\rightarrow a}) = \rmG\left(\rvh^{(t-1)}(v_{z\rightarrow a}), \rvz^{(t)}(v_{z\rightarrow a}) + \rvz^{t}(v_z)\right)
\]

\item Generate logits and apply softmax within each domain:
\[
\begin{aligned}
    \rvo^{(t)}_{z\rightarrow a} &= \text{clip}\left(\rmO\left( \rvh^{(t)}(v_{z\rightarrow a}) \right) - \max_{a\in\gD(z)} \rmO\left( \rvh^{(t)}(v_{z\rightarrow a}) \right), [-100, 0]\right)\\
\mu^{(t)}(v_{z\rightarrow a}) &= \frac{\exp \rvo^{(t)}_{z\rightarrow a}}{\sum_{a'\in \mathcal{D}(z)} \exp \rvo^{(t)}_{z\rightarrow a'}}
\end{aligned}
\]

\item Sample the next assignment $\alpha^{(t)}$, selecting the next value independently for each variable \(x\), with probabilities $\sP\left(\alpha^{(t)}(x)=a\right) = \mu^{(t)}(v_{x\rightarrow a})$ for all $a\in\gD(x)$.
\end{itemize}

Note that the suggested methodology for evaluating probabilities \(\sP\left(\alpha^{(t)}(x) = a\right)\) is approximately equivalent to applying softmax directly on \(\rmO\left(\rvh^{(t)}(v_{x\rightarrow a})\right)\). However, applying this augmentation, we are guaranteed that for any variable \(x\) and a relevant value \(a\in\gD(x)\):
\[
\sP\left(\alpha^{(t)}(x) = a\right) =  \frac{\exp \rvo^{(t)}_{x\rightarrow a}}{\sum_{a'\in \mathcal{D}(x)} \exp \rvo^{(t)}_{x\rightarrow a'}} \geq \frac{e^{-100}}{|\gD(x)|} \geq \frac{1}{e^{100}|V(G)|}.
\]

\subsection{Training methodology}
\label{app:anycq_training}
Suppose we are given a training query \(Q(x) = \exists \vec{y}. \Phi(x, \vec{y})\).
We run \(\Theta\) on \(\exists {x} . Q(x)\) for~\(T_\text{train}\) search steps, recovering the assignments \(\alpha^{(0)}, ..., \alpha^{(T_\text{train})}\) and the intermediate value probability distributions:
\[
\mu^{(1)} = \left\{\mu_z^{(1)} | z\in\{\vec{x},\vec{y}\}\right\}, \:\dots\: ,\: \mu^{(T_\text{train})} = \left\{\mu_z^{(T_\text{train})} | z\in\{\vec{x},\vec{y}\}\right\}
\]
The reward \(R^{(t)}\) for step \(1 \leq t \leq T\) is calculated as the difference between the score for~assignment \(\alpha^{(t)}\) and the best assignment visited so far:
\[
R^{(t)} = \max \left( 0, S^{(t)} - \max_{t'<t} S^{(t')} \right)
\]
where \(S^{(t)} = S_{\pi_\text{train}}\left(\Phi(\alpha^{(t)}(x)/x, \alpha^{(t)}(\vec{y})/\vec{y})\right)\).
Additionally, the transition probability
\[
P^{(t)} = \sP\left(\alpha^{(t)} | \mu^{(t)} \right) = \prod_{z\in\{\vec{x},\vec{y}\}} \mu_z^{(t)}\left(\alpha^{(t)}(z)\right)
\]
represents the chance of drawing assignment \(\alpha^{(t)}\) at step \(t\), given distributions \(\left\{\mu_z^{(t)} | z\in\{\vec{x},\vec{y}\}\right\}\).
The corresponding REINFORCE's training loss is evaluated as a weighted sum of rewards generated during \(T_\text{train}\) search steps and the model weights are then updated using the gradient descend equation:
\[
\theta \leftarrow \theta - \alpha \:\cdot\:\nabla_\theta \left( -\sum_{i=0}^{T_\text{train}-1} \gamma^{i} \left(\left(\log P^{(t)}\right) \cdot \sum^{T_\text{train}}_{t=i+1}\left(\gamma^{t-i-1} R^{(t)}\right) \right) \right)
\]
where \(\gamma \in (0,1]\) is a discount factor and  \(\alpha \in \mathbb{R}\) is the learning rate.

For the training data, we use the training splits of the existing FB15k-237 and NELL CQA datasets \cite{betae}, consisting of queries of types: `1p', `2p', `3p', `2i', `3i', `2in', `3in', `pin', `inp' (see \Cref{tab:small_types} for the corresponding first-order logic formulas).
Hence, during training, \(\anycq\) witnesses queries with projections, intersections and negations, learning principles of this logical structures.
However, all of these queries mention at most 3 free variables, remaining limited in size. 

\begin{table}[t]
    \centering
    \caption{Simple query types}
    \begin{tabular}{cc}
    \toprule
         \textbf{Split} & \textbf{Formula} \\
         \midrule
         1p & \(Q(x_1) = \mathsf{r}_1(x, c_1)\) \\
         \midrule
         2p & \(Q(x_1) = \exists y_1 . \mathsf{r}_1(x_1, y_1) \land \mathsf{r}_2(y_1, c_1)\) \\
         \midrule
         3p & \(Q(x_1) = \exists y_1, y_2. \mathsf{r}_1(x_1, y_1) \land \mathsf{r}_2(y_1, y_2)\land \mathsf{r}_3(y_2, c_1)\) \\
         \midrule
         2i & \(Q(x_1) = \mathsf{r}_1(x, c_1) \land \mathsf{r}_2(x, c_2)\) \\
         \midrule
         3i & \(Q(x_1) = \mathsf{r}_1(x, c_1) \land \mathsf{r}_2(x, c_2) \land \mathsf{r}_3(x, c_3)\) \\
         \midrule
         pi & \(Q(x_1) = \exists y_1 . \mathsf{r}_1(x_1, y_1) \land \mathsf{r}_2(y_1, c_1) \land \mathsf{r}_3(x_1, c_2)\) \\
         \midrule
         ip & \(Q(x_1) = \exists y_1 . \mathsf{r}_1(x_1, y_1) \land \mathsf{r}_2(y_1, a_1) \land \mathsf{r}_3(y_1, a_2)\) \\
         \midrule
         2i & \(Q(x_1) = \mathsf{r}_1(x, c_1) \land \neg \mathsf{r}_2(x, c_2)\) \\
         \midrule
         3i & \(Q(x_1) = \mathsf{r}_1(x, c_1) \land \mathsf{r}_2(x, c_2) \land \neg\mathsf{r}_3(x, c_3)\) \\
         \midrule
         inp & \(Q(x_1) = \exists y_1 . \mathsf{r}_1(x_1, y_1) \land \mathsf{r}_2(y_1, c_1) \land \neg \mathsf{r}_3(y_1, c_2)\) \\
         \midrule
         pin & \(Q(x_1) = \exists y_1 . \mathsf{r}_1(x_1, y_1) \land \mathsf{r}_2(y_1, c_1) \land \neg \mathsf{r}_3(x_1, c_2)\) \\
         \bottomrule
    \end{tabular}
    \label{tab:small_types}
\end{table}

\subsection{Hyperparameters and implementation}
\label{app:hyperparameters}

\textbf{Architecture.} We choose the hidden embedding size \(d=128\) in the \(\anycq\) architecture for all experiments. 
All MLPs used in our model consist of two fully connected layers with ReLU \cite{agarap2018deep} activation function. The intermediate dimension of the hidden layer is chosen to be 128.

\textbf{Training.}
The REINFORCE \cite{reinforce} discount factor \(\lambda\) is set to \(0.75\) for both datasets, following the best configurations in ANYCSP experiments.
During training, we run our models for~\(T_\text{train} = 15\) steps.
The batch size is set to \(4\) for FB15k-237 and \(1\) for NELL, due to the GPU memory constraints.
All models are trained with an Adam \cite{adam} optimizer with learning rate \(5\cdot10^{-6}\) on a single NVIDIA Tesla V100 SXM2 with 32GB VRAM.
We let the training run for 4 days, which translates to 500,000 batches on FB15k-237 and 200,000 batches for NELL, and choose the final model for testing.

\textbf{Inference.}
To run all experiments, we use an Intel Xenon Gold 6326 processor with 128GB RAM, and an NVIDIA A10 graphics card with 24GB VRAM.   

\subsection{Trained \anycq\: versus random search}
\label{sec:anycq_vs_random}

To clarify the contribution of the training procedure in \anycq, we conducted an additional ablation comparing the fully trained model against an untrained random-search baseline.
In this case, each variable assignment is chosen independently and uniformly at random during the search process.
The~resulting F1 scores, evaluated on QAC benchmarks, are presented in Table~\ref{tab:qac_random_vs_trained}.
Unsurprisingly, the performance drops drastically when training is removed. In particular, the random search fails completely on large-query splits, where the model must correctly assign values to over eight variables.

That said, we would like to acknowledge the potential of “simpler” strategies, such as greedy search or hill-climbing, as promising future directions.
While such algorithms were not applicable in the ranking-based formulation of prior work (which required full entity rankings), our classification-based setup now makes them feasible. Exploring these strategies could yield competitive baselines for complex query answering; however, we consider this beyond the scope of the current study.

\begin{table}[t]
\small
\centering
\caption{F1-scores of trained and randomly initialzed models on QAC datasets.}
\begin{tabular}{l c c c c c c c c c c}
\toprule
\textbf{Dataset} & \textbf{Model} & \textbf{2p} & \textbf{3p} & \textbf{pi} & \textbf{ip} & \textbf{inp} & \textbf{pin} & \textbf{3-hub} & \textbf{4-hub} & \textbf{5-hub} \\
\midrule
\multirow{2}{*}{FB15k-237-QAC}
& random & 3.3 & 0.0 & 4.9 & 2.9 & 4.4 & 3.5 & 0.0 & 0.0 & 0.0 \\
& \anycq & 75.8 & 71.3 & 82.1 & 78.8 & 76.7 & 75.7 & 52.4 & 49.9 & 51.9 \\
\midrule
\multirow{2}{*}{NELL-QAC}
& random & 2.8 & 0.0 & 3.3 & 2.4 & 3.1 & 3.1 & 0.0 & 0.0 & 0.0 \\
& \anycq & 76.2 & 72.3 & 79.0 & 75.4 & 76.7 & 75.3 & 57.2 & 52.6 & 58.2 \\
\bottomrule
\end{tabular}
\label{tab:qac_random_vs_trained}
\end{table}

\subsection{Scope of formulas}
\label{app:scope_of_formulas}
Importantly, our method is not limited to conjunctive formulas. 
Suppose we are given a Boolean formula \(\varphi = \exists \vec{y} . \Psi(\vec{y})\) where \(\Psi(\vec{y})\) is quantifier-free and in disjunctive normal form (DNF), so that \(\Psi(\vec{y}) = C_1 \lor ... \lor C_n\) where each \(C_i\) is a~conjunction of~literals.
Then:
\[
\varphi \equiv (\exists \vec{y} . C_1) \lor ... \lor (\exists \vec{y} . C_n)
\]
which can be processed by \(\anycq\) by independently solving each \((\exists \vec{y} . C_i)\) and aggregating the results.
Moreover, the ability of our model to handle higher arity relations enables efficient satisfiability evaluation for~existential formulas in the conjunctive normal form.
Let \(\psi = \exists \vec{y} . \left(D_1 \land ... \land D_n\right)\) where each \(D_i\) is a disjunction of literals.
Consider \(D_i = l_{i,1} \lor ... \lor l_{i,m}\) and let \(\vec{z}_i = \var(D_i)\).
We~can~view the~disjunctive~clause \(D_i\) as a single relation \(D_i(\vec{z}_i)\) evaluating to
\[
S_{\pi,G}(D_i(\alpha(\vec{z}_i)/\vec{z}_i)) = \max_{j} S_{\pi,G}(l_{i,j}(\alpha(\var(l_{i,j}))/\var(l_{i,j})))
\] 
Under this transformation, \(\psi = \exists \vec{y} . \left(D_1(\vec{z}_1) \land ... \land D_n(\vec{z}_n)\right)\) becomes a conjunctive query, hence processable by \(\anycq\). Up to our knowledge, we present the first query answering approach efficiently scoring arbitrary CNF Boolean queries over incomplete knowledge graphs.  

\subsection{Expressivity}
\label{app:expressivity}
Standard graph neural networks are known to have limited expressive power \cite{xu18}, e.g. MPNNs cannot produce different outputs for graphs not distinguishable by~the~Weisfeiler-Lehman algorithm \cite{leman1968reduction}.
We argue that \(\anycq\) does not suffer from this limitation.
It has been noticed that including randomness in GNN models increases their expressiveness \cite{gnn-rni}. 
In our case, for any Boolean conjunctive query \(Q=\exists\vec{y} \Phi(\vec{y})\) over a knowledge~graph~\(G\) and~a~relevant link predictor \(\pi\), for any assignment \(\alpha : \{\vec{y}\} \to V(G)\), there is a non-zero probability of \(\alpha\) being selected at some point of the search (see \Cref{app:architecture}).
Hence, any \(\anycq\) model has a chance of correctly predicting \(S_{\pi, G}(Q)\), making it fully expressive for the tasks of QAC and QAR.  

\subsection{Fuzzy logic}
\label{app:fuzzy_logic}

Fuzzy logic has been widely adopted in the CQA literature as a way to evaluate the satisfiability of~logical formulas in a continuous, differentiable manner. It underpins several prominent methods, including CQD~\citep{cqd}, GNN-QE~\citep{gnn-qe}, and QTO~\citep{qto}, due to its modularity and interpretability. However, especially when applied to large and structurally complex queries, fuzzy logic introduces several limitations that should be taken into account.

\textbf{Score vanishing. } Consider a conjunction of \(10\) literals, each scored at \(0.9\) by the link predictor. When using the product \textit{t}-norm, the formula score becomes \(0.9^{10} \approx 0.35\), despite all individual facts being highly probable.
This effect becomes more pronounced in long formulas, leading to overly conservative judgments.
To mitigate this, we adopt the G\"odel \textit{t}-norm (min operator), which in the same scenario would return a~more stable score of \(0.9\). Additionally, using the G\"odel t-norm with a \(0.5\) threshold is equivalent to applying propositional logic over binarized literal scores, making it well-suited for our classification-based objectives.

\textbf{Gradient instability in supervised learning. }
As discussed by~\citet{van2022analyzing}, another issue with fuzzy logic arises in~differentiable learning settings, where gradients must propagate through the query structure and the fuzzy connectives.
This can lead to vanishing or unstable gradients, especially for large or cyclic queries.
In our case, however, this problem is largely avoided: \(\anycq\) is trained using reinforcement learning, where the fuzzy logic score is used as a scalar reward signal and not differentiated through.
During training, we apply REINFORCE, which treats the Boolean score as an external reward, and during inference, fuzzy logic is only used to rank complete assignments. As~such, our framework sidesteps the gradient-related challenges described in~\citet{van2022analyzing}, while retaining the benefits of fuzzy logic for scoring.

\subsection{Edge labels}
\label{app:edge_labels}

To effectively navigate the space of variable assignments, our framework augments the computational graph \(G_{Q,\alpha}\) with edge labels that encode information from the link predictor \(\pi\). These edge labels play a critical role in guiding the search process by helping the model answer two fundamental questions:
\begin{itemize}
    \item Which assignments to variables are worth considering? (exploration)
    \item How should the current assignment be changed to satisfy more literals? (exploitation).
\end{itemize}

To this end, we define two types of edge labels on the graph edges connecting literal vertices \(v_{\psi_i}\) with value vertices \(v_{e\rightarrow a}\): potential edge (PE) labels and light edge (LE) labels. PE labels are used to identify whether a~particular substitution could lead to a satisfying assignment and are independent of the current state. They support \emph{exploration} by indicating globally promising directions in the search space and can be seen as a way to constrain the search to regions of high potential. In contrast, LE labels are assignment-dependent and indicate whether a local change - modifying a single variable’s value, would make a particular literal true. They enable exploitation by directing the model toward refinements of the current assignment that increase the satisfiability of the formula. We describe each type of label in detail in the following subsections.

\subsubsection{LE Labels: Guiding Local Improvements}
\label{app:le_labels}

Light edge (LE) labels were originally introduced in the ANYCSP \citep{anycsp} framework as the primary mechanism for guiding discrete search. In the context of query answering, their purpose is to identify marginal changes to the current variable assignment that are likely to increase the number of satisfied literals in the query. That is, given an assignment \(\alpha\), LE labels help determine which single-variable substitutions are most promising for improving the current solution. This makes them particularly useful during local exploitation, where the goal is to refine an existing candidate assignment rather than explore the full space.

\textbf{Formal definition. } Let \(Q = \exists \vec{y} . \Phi(\vec{y})\) be an existential Boolean conjunctive query, and let \(\psi_i\) be a literal in~\(Q\), mentioning terms \(\vec{z}\). Let \(\alpha\) be the current assignment to the variables of \(Q\), and let \(e\in\vec{z}\) be a term in~\(\psi_i\). For a candidate entity \(a\in\mathcal{D}(e)\) (recall that \(\mathcal{D}(e) = V(G)\) for variables and \(\mathcal{D}(e) = \{e\}\) for constants), the~LE label on the edge between the literal vertex \(v_{\psi_i}\) and the value vertex \(v_{e\rightarrow a}\) is defined as follows:
\[
L_E (v_{\psi_i}, v_{z,a}; \alpha) = 
\begin{cases}
    1 & \text{ if  } S_{\pi,G} (\psi_i(\alpha_{z\rightarrow a}(\vec{z})/\vec{z})) \geq 0.5  \\
    0 & \text{ otherwise }
\end{cases}
\]
This definition reflects whether updating the current assignment \(\alpha\) by changing only the value of~\(e\)~to~\(a\) (keeping all other variable assignments fixed) is sufficient to make the literal \(\psi_i\) true.

\textbf{Example. } Suppose the query is:
\[
Q = \exists y_1, y_2 . r(a, y_1) \land s(y_1, y_2)
\]
with the current assignment \(\alpha = \{y_1 \rightarrow a_1, y_2 \rightarrow a_2\}\), and we focus on the literal \(\psi_2 = s(y_1, y_2)\). Let's consider a marginal update to the variable \(y_2\), and let \(a_2' \in \mathcal{D}(y_2)\). To determine the LE label \(L_E(v_{\psi_2}, v_{y_2 \rightarrow a_2'}; \alpha)\), we check whether \(s(a_1, a_2')\) holds in the (predicted) KG \(\Tilde{G}\).  If it does, then this local update would satisfy \(\psi_2\), and the label is set to 1. Otherwise, the label is 0.
This allows the~model to reason about whether changing \(y_2\) to \(a_2'\) would improve the current assignment in terms of satisfying more of the query structure.

\subsubsection{PE Labels: Prioritizing Promising Assignments}
\label{app:pe_labels}

Potential edge (PE) labels are introduced in this work as an extension to the ANYCSP framework, specifically to address the challenges posed by the large domain sizes in modern knowledge graphs.
While LE labels guide the refinement of a given assignment, PE labels serve a complementary purpose: they help identify which candidate variable assignments are worth considering in the first place.
In~other words, PE labels support exploration by informing the model which edges in the computational graph represent substitutions that are likely to participate in some satisfying assignment, independent of the current state.

\textbf{Formal definition. } Formally, let \(Q = \exists \vec{y}. \Phi(\vec{y})\) be a conjunctive Boolean query, let \(\psi_i \in \Phi\) be a literal mentioning terms \(\vec{z}\), and let \(e\in\vec{z}\). Then, for every \(a\in\mathcal{D}(e)\), the PE label on the edge between \(v_{\psi_i}\) and \(v_{e\rightarrow a}\) is defined as follows:
\[
P_E(v_{\psi_i}, v_{z,a}) = 
\begin{cases}
    1 & \text{ if  } \exists \alpha . \left( \alpha(e) = a \land S_{\pi,G} (\psi_i(\alpha(\vec{z})/\vec{z})) \geq 0.5 \right) \\
    0 & \text{ otherwise }
\end{cases}
\]
Intuitively, the label is set to 1 if there exists any full assignment to the variables of \(\psi_i\) such that \(\psi_i\) becomes true when \(e\) is set to \(a\).
Importantly, this is evaluated without reference to the current partial assignment \(\alpha\), making PE labels suitable for filtering the search space early in the computation.

\textbf{Example. } Consider the same example query as before:
\[
Q = \exists y_1, y_2 . r(a, y_1) \land s(y_1, y_2)
\]
and the literal \(\psi_2 = s(y_1, y_2)\). Let \(a_2 \in \mathcal{D}(y_2)\) be a viable assignment to \(y_2\).
To evaluate the PE label \(P_E(v_{\psi_2}, v_{y_2 \rightarrow a_2})\), we check whether \(\exists y_1 . s(y_1, a_2)\) is satisfied, i.e. whether there exists some \(a_1\) such that the literal \(s(a_1, a_2)\) is true, according to the link predictor \(\pi\).
If such \(a_1\) exists, we set the label to~1, and otherwise -- to 0.
This allows the GNN to prioritize reasoning about value assignments that could plausibly contribute to satisfying the query, rather than wasting capacity on highly unlikely candidates.

\paragraph{Importance of PE labels.} 
We empirically validate the significance of this modification on the proposed QAR benchmark.
To this end, we train an \(\anycq\) model from scratch, disabling the signal from PE labels by setting all of them to \(0\) throughout the training and inference.  
The results, shown in \Cref{tab:PEimpact}, demonstrate that without access to PE labels, \(\anycq\) fails to generalize to queries of large size and is unable to produce a correct answer, even for a single sample.

\begin{table}[t]
  \centering
  \caption{F1-scores of \(\anycq\) models with and without PE labels.}
  \label{tab:PEimpact}
\setlength{\tabcolsep}{5pt}
  \begin{tabular}{ccccccc}
    \toprule    
     & \multicolumn{3}{c}{\textbf{FB15k-237-QAR}} & \multicolumn{3}{c}{\textbf{NELL-QAR}} \\
    \midrule
     \textbf{PE labels} & \textbf{3-hub} & \textbf{4-hub} & \textbf{5-hub} & \textbf{3-hub} & \textbf{4-hub} & \textbf{5-hub} \\
    \midrule
     \checkmark  & 56.3 & 52.7 & 54.1  & 51.4 & 53.0 & 48.4 \\
     \(\times\) & 0.0 & 0.0 & 0.0 & 0.0 & 0.0 & 0.0  \\
    \bottomrule
  \end{tabular}
  
\end{table}

\paragraph{PE label generation.}
Given the critical role of this modification in our framework, it is essential to address the efficient generation of PE labels.
In this work, we pre-compute PE labels for both datasets, aligning them precisely with the definitions, with respect to the selected test link predictors.
However, this process can become computationally expensive, potentially requiring hours, and becoming highly inefficient, particularly in scenarios where the link predictor frequently changes, e.g. during validation.
To mitigate this inefficiency, we propose alternative methodologies to approximate true PE labels, enabling faster cold-start inference.

Our main alternative bases on the closed world assumption (CWA) \cite{open_world_assumption}, which restricts the set of entities that should be considered for prediction of unobserved facts.
Formally, let \(G\) be an observable knowledge graph and let \(\Tilde{G}\) be its completion.
Then, for any \(r\in R(G)\) and any \(a,b\in V(G)\):
\[
\Tilde{G} \models r(a,b) \implies \exists b' \in V(G) \:.\: G \models r(a,b') 
\]
\[
\Tilde{G} \models r(a,b) \implies \exists a' \in V(G) \:.\: G \models r(a',b) 
\]
With this assumption, the set of pairs for which \(\Tilde{G}\models r(a,b)\) holds becomes limited.
Indeed, \(a\) needs to be a head of an observable relation \(r(a,b')\) and analogously, \(b\) needs to be a tail of an observable \(r(a',b)\).
Therefore, the induced approximation of PE labels:
\[
\Hat{P}_E (v_{r(x,y)}, v_{x,a}) =
\begin{cases}
    1 & \text{ if  } \exists b'\in \mathcal{D}(y) . G \models r(a,b') \\
    0 & \text{ otherwise }
\end{cases}
\]
\[
\Hat{P}_E (v_{r(x,y)}, v_{y,b}) =
\begin{cases}
    1 & \text{ if  } \exists a'\in \mathcal{D}(x) . G \models r(a',b) \\
    0 & \text{ otherwise }
\end{cases}
\]
can be efficiently derived in time \(O(|E(G)|)\).
We use this modification during the validation process to avoid the necessity of computing the precise PE labels.

An alternative approach, not explored in this work, involves incorporating domain-specific information about the underlying knowledge graph.
For instance, if the relation in a given query is \textit{fatherOf}, both entities are likely to be humans.
By labeling all entities in \(V(G)\) with relevant tags, such information could be extracted, and objects classified as `people' could be assigned a corresponding PE label of~1.
While we prioritize generalizability and do not pursue this direction, we recognize its potential, particularly for sparse knowledge graphs where CWA-derived PE labels may be too restrictive.

\paragraph{PE labels versus domain restriction.}
An alternative to relying on an additional set of labels to prevent the search from accessing unreasonable assignments could be restricting the domains \(\mathcal{D}(y)\) of~the~considered variables.
In the current formulation, each variable \(y\) mentioned in the input Boolean query \(Q\) is assigned a domain \(\mathcal{D}(y) = V(G)\).
Reducing the considered domains can significantly shrink the computational graph, leading to faster computation.
Such a solution would be specifically beneficial when operating on large knowledge graphs, and even essential for applications to milion-scale KGs, such as Wikidata-5M \citep{wikidata5m}. While this approach improves inference efficiency, improper application can render correct answers unreachable due to excessively restrictive domain reductions. Consequently, we leave further exploration of this direction for future work.

\clearpage

\section{Theorems and proofs}
\label{app:proofs}
\begin{theorem}
\label{thm:completeness}
Let \(Q = \exists\vec{y} . \Phi(\vec{y})\) be a conjunctive Boolean query and let \(\Theta\) be any $\anycq$ model equipped with a predictor \(\pi\). For any execution of \(\Theta\) on \(Q\), running for \(T\) steps:
\[
\sP\left(\Theta(Q | G, \pi) = S_{\pi,G}(Q)\right) \rightarrow 1 \qquad \text{as}\qquad T \rightarrow \infty 
\]
\end{theorem}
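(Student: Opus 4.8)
The plan is to show that the search process, viewed as a stochastic process over assignments, almost surely visits an optimal assignment $\alpha_{\max}$ (or, more precisely, \emph{some} assignment achieving the maximal Boolean formula score) within finitely many steps, and that once such an assignment is visited, the running maximum $\Theta(Q \mid G, \pi)$ equals $S_{\pi,G}(Q)$ forever after. Since $\Theta(Q \mid G, \pi)$ is the maximum of $S_{\pi,G}(\Phi(\alpha^{(t)}(\vec{y})/\vec{y}))$ over $0 \le t \le T$, and since this score only takes finitely many values (the domain $\mathcal{D}(y) = V(G)$ is finite, so there are only $|V(G)|^{|\vec{y}|}$ possible assignments), it suffices to prove that the probability that \emph{no} optimal assignment has been sampled in the first $T$ steps tends to $0$.

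The key step is a uniform lower bound on transition probabilities. First I would argue that from \emph{any} assignment $\alpha$, in a \emph{single} step the process has probability bounded below by some constant $\delta > 0$ of jumping directly to a fixed optimal assignment $\alpha_{\max}$. This follows because the next assignment $\alpha^{(t)}$ is obtained by sampling each coordinate $\alpha^{(t)}(y)$ independently from the GNN-produced distribution $\mu_y^{(t)}$ over the finite set $\mathcal{D}(y)$; one needs that every such distribution assigns strictly positive mass to every element of its domain. This positivity is presumably guaranteed by the architecture of $\theta$ (e.g., the distributions $\mu_y^{(t)}$ are softmax outputs, hence have full support), and I would invoke the description in Appendix~\ref{app:architecture} for this; a uniform bound $\delta$ then follows by compactness, or more simply because the logits are continuous functions of bounded inputs. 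Given such a $\delta$, the probability of \emph{not} hitting $\alpha_{\max}$ in any given step is at most $1 - \delta$, so the probability of missing it for all of steps $1, \dots, T$ is at most $(1-\delta)^T \to 0$. Hence
\[
\sP\left(\Theta(Q \mid G, \pi) < S_{\pi,G}(Q)\right) \le (1-\delta)^{T} \to 0,
\]
and since $\Theta(Q \mid G, \pi) \le S_{\pi,G}(Q)$ always (it is a maximum over a subset of all assignments, and $S_{\pi,G}(Q) = S_{\pi,G}(\Phi(\alpha_{\max}(\vec{y})/\vec{y}))$ by unfolding the Gödel-semantics definition of the existential quantifier), the two are equal with probability tending to $1$.

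The main obstacle I anticipate is establishing the uniform positivity $\delta > 0$ of the sampling distributions $\mu_y^{(t)}$ \emph{rigorously and uniformly over $t$}. A priori the hidden embeddings $\mathbf{h}^{(t)}$ evolve over time and could in principle drift so that the softmax logits diverge and the distributions degenerate to point masses in the limit; one must rule this out, e.g., by noting that the computational graph $G_{Q,\alpha^{(t)}}$ ranges over a \emph{finite} set of configurations (finitely many assignments, hence finitely many edge-label patterns), so the GNN is evaluated on finitely many distinct inputs and produces finitely many distinct output distributions, each with full support; the minimum over this finite set gives $\delta$. If instead the architecture carries unbounded state, one would need a more careful argument (e.g., a recurrence/Borel--Cantelli argument showing the relevant event has infinitely many independent-enough chances). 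I would also remark that the convergence is in fact geometric in $T$, which is stronger than the stated qualitative limit, and note that the same argument applies verbatim to query answer retrieval once a satisfying assignment is found.
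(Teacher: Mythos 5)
Your proposal follows essentially the same route as the paper's proof: fix an optimal assignment $\alpha_{\max}$, lower-bound the per-step probability of sampling it by a constant $\delta>0$ independent of $t$, and conclude that the failure probability is at most $(1-\delta)^T\to 0$. The one point where you hedge --- uniform positivity of the sampling distributions over $t$ --- is exactly where the paper's argument leans on a concrete architectural detail: the logits are explicitly clipped to $[-100,0]$ before the softmax, which yields $\sP\bigl(\alpha^{(t)}(y)=a\bigr)\ge e^{-100}/|V(G)|$ for every step, value, and variable, regardless of how the hidden state evolves. Your fallback argument (finitely many distinct GNN inputs, hence finitely many output distributions) would not go through as stated, since the GRU hidden embeddings $\mathbf{h}^{(t)}$ are continuous and history-dependent, so the network is not evaluated on a finite set of configurations; but you correctly anticipated this possibility and the clipping mechanism is precisely the ``more careful argument'' needed.
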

\begin{proof}
Let \(\Theta\) be an \(\anycq\) model equipped with a predictor \(\pi\) for a knowledge graph \(G\). Let \(Q=\exists \vec{y}.\Phi(\vec{y})\) be a conjunctive Boolean query with \(\vec{y} = y_1, ..., y_k\). Let
\[
\alpha_{\max} = \argmax_{\alpha : \vec{y} \to V(G)} S_{\pi,G}(\Phi(\alpha(\vec{y})/\vec{y}))
\]
so that
\[
S_{\pi,G}(Q) = S_{\pi,G}(\Phi(\alpha_{\max}(\vec{y})/\vec{y}))
\]
Consider an execution of \(\Theta\), running for \(T\) steps, and let \(\alpha^{(0)}, ..., \alpha^{(T)}\) be the generated assignments. Then,
\[
\begin{aligned}
\sP\left(\Theta(Q | G, \pi) \neq S_{\pi,G}(Q)\right) &=  \sP \left( S_{\pi,G}(Q) \neq  S_{\pi,G}\left(\Phi\left(\alpha^{(t)}(\vec{y})/\vec{y}\right)\right) \text{ for all } 0\leq t\leq T
\right) \\
&\leq \sP \left( S_{\pi,G}(Q) \neq  S_{\pi,G}\left(\Phi\left(\alpha^{(t)}(\vec{y})/\vec{y}\right)\right) \text{ for all } 1\leq t\leq T
\right) \\
&\leq \sP \left( \alpha^{(t)} \neq \alpha_{\max} \text{ for all } 1\leq t\leq T
\right)
\end{aligned}
\]
By the remark at the end on \Cref{app:architecture}:
\[
\sP\left(\alpha^{(t)}(y) = a\right) \geq \frac{1}{e^{100} |V(G)|} \qquad \forall 1 \leq t \leq T \:\:\forall a\in V(G)\:\:\forall y \in \vec{y} 
\]
In particular:
\[
\sP\left(\alpha^{(t)}(y) = \alpha_{\max}(y)\right) \geq \frac{1}{e^{100} |V(G)|} \qquad \forall 1 \leq t \leq T \:\: \forall y \in \vec{y}
\]
so as the value for each variable in \(\alpha^{(t)}\) is sampled independently:
\[
\sP\left(\alpha^{(t)} = \alpha_{\max}\right) \geq \left( \frac{1}{e^{100} |V(G)|} \right) ^ k
\]
Therefore:
\[
\begin{aligned}
\sP\left(\Theta(Q | G, \pi) \neq S_{\pi,G}(Q)\right) &\leq \sP \left( \alpha^{(t)} \neq \alpha_{\max} \text{ for all } 1\leq t\leq T
\right) \\
& \leq \left( 1 - \left( \frac{1}{e^{100} |V(G)|} \right) ^ k \right)^T \\
\end{aligned}
\]
which tends to 0 as \(T \rightarrow \infty\).
\end{proof}

\clearpage

\begin{proposition}[Scores of a Perfect Link Predictor]
\label{app:perf_pred_scores}
Let \(Q\) be a quantifier-free Boolean formula over an observable knowledge graph \(G\). Then, the score of \(Q\) w.r.t. the perfect link predictor \(\Tilde{\pi}\) for the completion \(\Tilde{G}\) of \(G\) satisfies:
    \[
    S_{\Tilde{\pi},G}(Q) = \begin{cases}
        0 & \text{if } \Tilde{G} \not\models Q \\
        1 & \text{if } \Tilde{G} \models Q
    \end{cases}
    \]
\end{proposition}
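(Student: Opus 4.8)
The statement is a purely structural claim about the fuzzy score $S_{\Tilde\pi,G}$ of a \emph{quantifier-free} Boolean formula $Q$ under the perfect link predictor for $\Tilde G$. The plan is to prove it by structural induction on the formula $Q$, using exactly the recursive definition of $S_{\pi,G}$ from the Preliminaries together with the definition of $\Tilde\pi$. The key observation is that since $Q$ is quantifier-free, only the first four clauses of the recursion for $S_{\pi,G}$ are ever invoked, and each of $\{0,1\}$ is preserved under the operations $x\mapsto 1-x$, $\min$, and $\max$ — so the whole induction stays inside $\{0,1\}$ and mirrors the recursive evaluation of $\Tilde G\models Q$ in classical (Boolean) semantics.

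\textbf{Key steps, in order.} First, the base case: $Q = r(a,b)$ a ground atom. Then $S_{\Tilde\pi,G}(r(a,b)) = \Tilde\pi(r,a,b)$, which is $1$ if $r(a,b)\in E(\Tilde G)$, i.e.\ $\Tilde G\models r(a,b)$, and $0$ otherwise, by the definition of the perfect link predictor; this is exactly the claim. Next, the inductive step splits into three cases. For $Q = \neg Q'$ with $Q'$ quantifier-free: by the induction hypothesis $S_{\Tilde\pi,G}(Q')\in\{0,1\}$ equals $\mathbbm{1}[\Tilde G\models Q']$, so $S_{\Tilde\pi,G}(\neg Q') = 1 - S_{\Tilde\pi,G}(Q') = \mathbbm{1}[\Tilde G\not\models Q'] = \mathbbm{1}[\Tilde G\models \neg Q']$. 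For $Q = Q_1\land Q_2$: $S_{\Tilde\pi,G}(Q_1\land Q_2) = \min(S_{\Tilde\pi,G}(Q_1), S_{\Tilde\pi,G}(Q_2))$, and since both arguments are in $\{0,1\}$ and equal the respective indicators, the $\min$ equals $1$ iff both are $1$, i.e.\ iff $\Tilde G\models Q_1$ and $\Tilde G\models Q_2$, which is $\mathbbm{1}[\Tilde G\models Q_1\land Q_2]$. The case $Q = Q_1\lor Q_2$ is symmetric with $\max$ in place of $\min$. Since $Q$ is quantifier-free, no existential clause arises, so these cases are exhaustive, and the induction closes.

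\textbf{Expected main obstacle.} Honestly there is no deep obstacle here — this is a routine structural induction. The only points requiring a little care are: (i) making sure the class of formulas considered is closed under the subformula relation so the induction is well-founded (it is, since $Q$ quantifier-free implies all its subformulas are quantifier-free), and (ii) being explicit that $\Tilde G\models Q$ for quantifier-free $Q$ is evaluated by the standard Tarskian recursion over $\land,\lor,\neg$ on ground atoms — matching clause-for-clause with the fuzzy recursion once values are confined to $\{0,1\}$. One should also note that the definition of $G\models Q$ given earlier in the excerpt is stated for existential queries via matches; for quantifier-free Boolean $Q$ this specializes to the ordinary propositional evaluation over the atoms, which is what makes the correspondence exact. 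If one wants to be fully careful, the cleanest phrasing is to prove the single invariant ``$S_{\Tilde\pi,G}(Q) = \mathbbm{1}[\Tilde G\models Q] \in \{0,1\}$'' by induction, which packages both conclusions (membership in $\{0,1\}$ and the correct value) into one statement and avoids any circularity.
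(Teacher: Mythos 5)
Your proof is correct and follows essentially the same route as the paper's: a structural induction on the quantifier-free formula with the ground-atom base case handled by the definition of \(\Tilde{\pi}\), and the \(\neg\), \(\land\), \(\lor\) cases handled by noting that \(1-x\), \(\min\), and \(\max\) restricted to \(\{0,1\}\) coincide with classical negation, conjunction, and disjunction. Your packaging of the invariant as a single indicator-function identity is a slightly tidier phrasing of the same argument, not a different approach.
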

\begin{proof}
    The claim follows from the structural induction on the formula \(Q\).
    For the base case, suppose that \(Q\) is an atomic formula \(r(a,b)\). The result follows trivially from the definition of a perfect link predictor \(\Tilde{\pi}\). Assume the claim holds for boolean formulas \(Q, Q'\). Then:
    \[
    \begin{aligned}
        S_{\Tilde{\pi},G}(\neg Q) = 1 - S_{\Tilde{\pi},G}(Q) = \begin{cases}
        1 & \text{if } \Tilde{G} \not\models Q \\
        0 & \text{if } \Tilde{G} \models Q
    \end{cases}
    = \begin{cases}
        1 & \text{if } \Tilde{G} \models \neg Q \\
        0 & \text{if } \Tilde{G} \not\models \neg Q
    \end{cases}
    \end{aligned}
    \]
    For \((Q \land Q')\), note that \(\Tilde{G} \models (Q\land Q') \iff \left( (\Tilde{G}\models Q) \land (\Tilde{G} \models  Q')\right) \) and hence
    \[
    \begin{aligned}
    S_{\Tilde{\pi},G}(Q \land  Q') = \min\left(S_{\Tilde{\pi},G}(Q), S_{\Tilde{\pi},G}( Q')\right) &= \begin{cases}
        1 & \text{if } S_{\Tilde{\pi},G}(Q) = S_{\Tilde{\pi},G}( Q')=1 \\
        0 & \text{otherwise }
    \end{cases} \\
    &= \begin{cases}
        1 & \text{if } \Tilde{G} \models Q \land \Tilde{G} \models  Q' \\
        0 & \text{otherwise }
    \end{cases} \\
    &= \begin{cases}
        1 & \text{if } \Tilde{G} \models (Q \land  Q') \\
        0 & \text{if } \Tilde{G} \not\models (Q \land  Q')
    \end{cases}
    \end{aligned}
    \]
    Similarly, for \((Q\lor Q')\), since \(\Tilde{G} \models (Q\lor Q') \iff \left( (\Tilde{G}\models Q) \lor (\Tilde{G} \models  Q')\right) \), we can deduce:
    \[
    \begin{aligned}
    S_{\Tilde{\pi},G}(Q \lor  Q') = \max\left(S_{\Tilde{\pi},G}(Q), S_{\Tilde{\pi},G}( Q')\right) &= \begin{cases}
        0 & \text{if } S_{\Tilde{\pi},G}(Q) = S_{\Tilde{\pi},G}( Q')=0 \\
        1 & \text{otherwise }
    \end{cases} \\
    &= \begin{cases}
        0 & \text{if } \Tilde{G} \not\models Q \land \Tilde{G} \not\models  Q' \\
        1 & \text{otherwise }
    \end{cases} \\
    &= \begin{cases}
        0 & \text{if } \Tilde{G} \not\models (Q \lor  Q') \\
        1 & \text{if } \Tilde{G} \models (Q \lor  Q')
    \end{cases}
    \end{aligned}
    \]
    which completes the inductive step.
\end{proof}

\begin{theorem}
    Let \(Q = \exists\vec{y} . Q(\vec{y})\) be a conjunctive Boolean query over an unobservable knowledge graph \(\Tilde{G}\) and let \(\Theta\) be any $\anycq$ model equipped with a perfect link predictor \(\Tilde{\pi}\) for \(\Tilde{G}\). If \(\Theta(Q | G, \Tilde{\pi}) > 0.5\), then \(\Tilde{G} \models Q\) .
    \label{thm:soundness}
\end{theorem}
\begin{proof}
    Consider the setup as in the theorem statement and suppose \(\Theta(Q|G,\Tilde{\pi}) > 0.5\). Then, there exists an assignment \(\alpha : \vec{y} \to V(G)\) (found at some search step) such that
    \[S_{\Tilde{\pi},G}(\Phi(\alpha(\vec{y})/\vec{y})) = \Theta(Q|G,\Tilde{\pi}) > 0.5\]
    By \Cref{app:perf_pred_scores}, this implies:
    \[
    S_{\Tilde{\pi},G}(\Phi(\alpha(\vec{y})/\vec{y})) = 1 \qquad \text{ and } \qquad \Tilde{G} \models \Phi(\alpha(\vec{y})/\vec{y}) 
    \]
    Hence, \(\Tilde{G} \models \exists\vec{y} . \Phi(\vec{y}) = Q\).
\end{proof}

\vcqgenprocessfig

\begin{figure}[ht]
    \centering
    \includegraphics[scale=0.234]{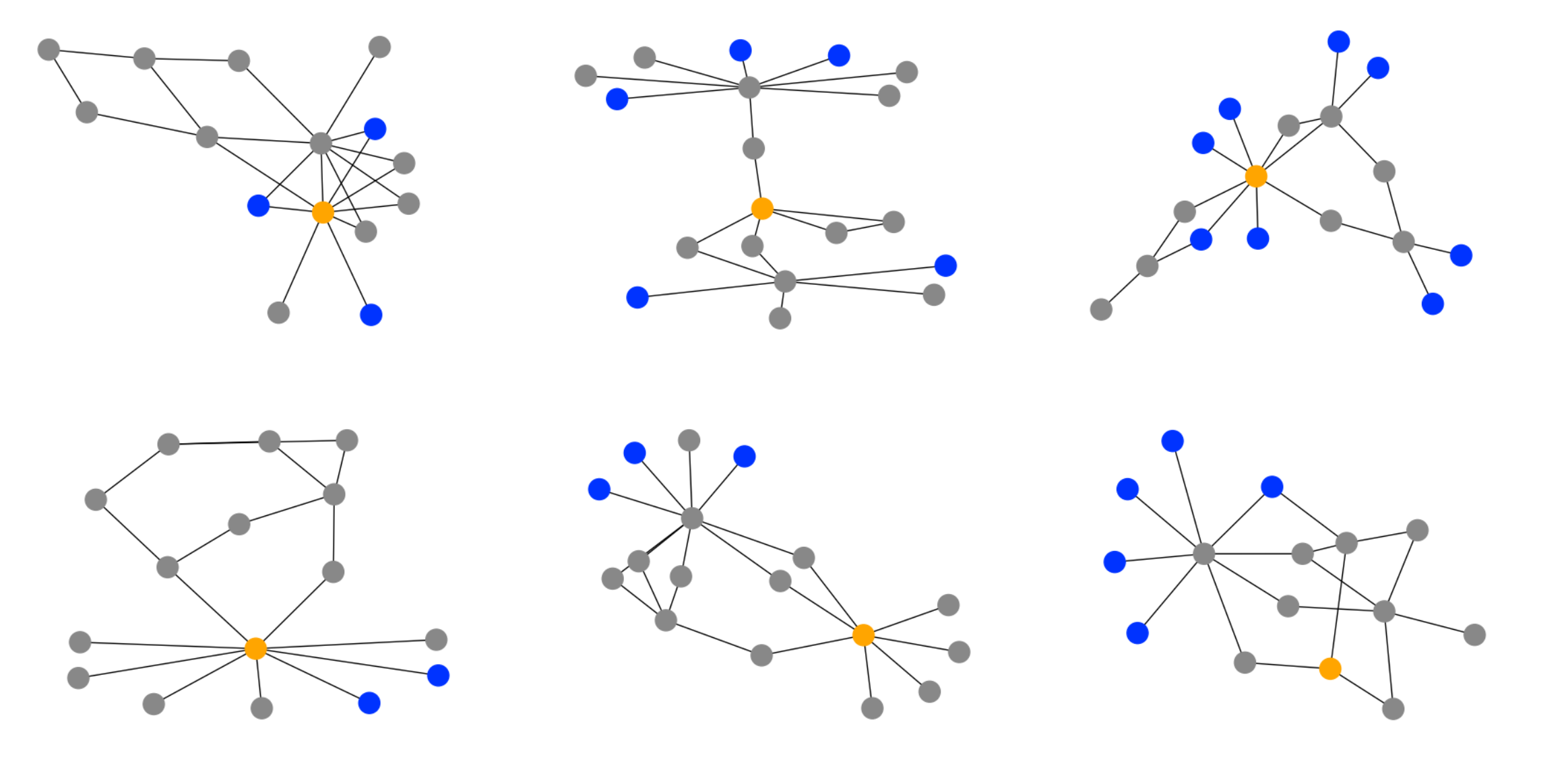}
    \includegraphics[scale=0.165]{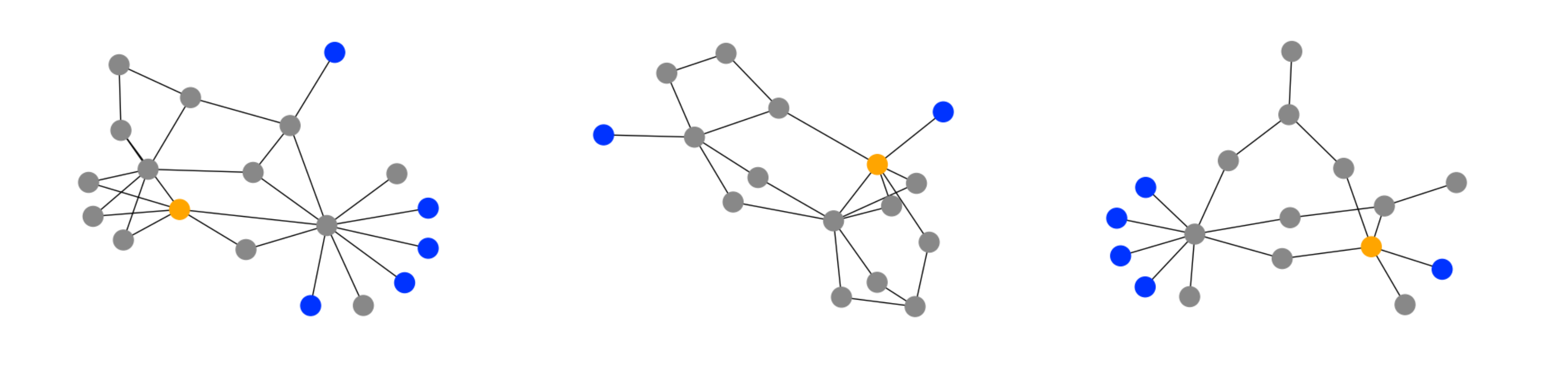}
    \caption{Examples of undirected query graphs of formulas from the FB15k-237-QAR `3-hub' split. Blue nodes represent constant terms, while grey - to the existentially quantified variables. The orange node corresponds to the free variable.}
\end{figure}
\clearpage

\section{Dataset constructions}
\label{app:dataset_generation}

Benchmark datasets in the existing query answering literature, FB15k-237 \cite{fb15k237} and NELL \cite{nell}, comprise formulas with simple structures, thereby impeding the comprehensive evaluation and advancement of methodologies and algorithms.
We address this gap by creating new validation and test datasets on top of well-established benchmarks, consisting of queries with complexity exceeding the processing power of known approaches.
In particular, we increase the number of~variables mentioned in the considered formulas from 3 to between 12 and 20, while imposing structural difficulty by sampling query graphs with multiple cycles, long-distance reasoning steps and multi-way conjunctions.

\subsection{Base large query generation}

Each of the considered datasets: FB15k-237 and NELL, provides three knowledge graphs \(G_\text{train}, G_\text{val}\), \( G_\text{test}\), for training, validation and testing, respectively, satisfying \mbox{\(E(G_\text{train}) \subset E(G_\text{val}) \subset E(G_\text{test}) \)}.
During validation, \(G_\text{train}\) is treated as the observable graph \(G\), while \(G_\text{val}\) as~its~completion \(\Tilde{G}\). 
Similarly, for testing, \(G=G_\text{val}\) and \(\Tilde{G} = G_\text{test}\).

We begin the dataset generation by sampling base formulas, to be later converted into instances for the QAC and QAR benchmarks.
During sampling, we use four hyperparameters: \(n_{\text{hub}}, n_{\text{min}}, p_{\text{const}} \text{ and } p_{\text{out}}\), whose different values contribute to creating different benchmark splits.
The process is visualized in \Cref{fig:vcq_gen_process}. 
A single base query is sampled as follows:
\begin{enumerate}
    \item A vertex \(v \in V(G)\) is sampled uniformly at random from \(V(G)\).
    \item Let \(\mathcal{N}_i(v)\) be the set of nodes whose distance from \(v\) in \(\Tilde{G}\) is at most \(i\). Without repetitions, sample \(n_{\text{hub}}\) `hub' vertices from \(\mathcal{N}_2(v)\) and call their set \(P\). If \(|\mathcal{N}_2(v)| < n_{\text{hub}}\), return to step 1.
    \item Consider the union of 1-hop neighborhoods of the `hub' vertices: \(D = \bigcup_{w\in P\cup\{v\}} \mathcal{N}_1(w)\).
    \item If \(w\in D\) is a leaf in the restriction \(\Tilde{G}_D\) of \(\Tilde{G}\) to D, remove it from \(D\) with probability \(p_{\text{out}}\).
    \item Sample a set \(D'\) of \(n_{\min}\) vertices from \(D\), such that the restriction of \(\Tilde{G}\) to \(D'\cup P \cup \{v\}\) is~a~connected subgraph. Let \(P' = D'\cup P \cup \{v\}\). If the restriction \(\Tilde{G}_{P'}\) of \(\Tilde{G}\) to \(P'\) is~a~subgraph of the observable graph \(G\), return to step 1.
    \item For each node \(w\) in \(D'\) independently, choose it to be portrayed by a constant term with~probability \(\frac{p_{\text{const}}}{d^2_{P'}(w)}\), where \(d_{P'}(w)\) is the degree of \(w\) in restriction of \(\Tilde{G}\) to \(P'\).
    \item The restriction \(\Tilde{G}_{P'}\) of \(\Tilde{G}\) to \(P'\) is then converted into the corresponding conjunctive formula, by transforming each edge \(r(w_1, w_2) \in E(\Tilde{G}_{P'})\) into a literal \(r(w_1, w_2)\). The~vertex~\(v\) is then replaced by the single free variable \(x_1\) and all nodes that were not chosen to be constant, are realized by distinct existentially quantified variables.
\end{enumerate}

For formulas sampled from FB15k-237, we choose \(n_{\min}=15\), while for NELL instances, \(n_{\min}=12\), due to the sparsity of~the~knowledge graph.
We consider three different choices of the parameters \(n_{\text{hub}}, p_{\text{const}} \text{ and } p_{\text{out}}\), resulting in three distinct splits, namely ``3-hub'', ``4-hub'' and ``5-hub'', and sample 1000 formulas of each type.
Using an SQL engine \citep{duckdb}, we then solve these queries with respect to both observable and unobservable knowledge graphs, discarding those with no hard answers.
The parameter values for each split are presented in \Cref{tab:gen_params}.

\begin{table}[t]
    \centering
    \caption{Hyperparameters for the generated dataset splits.}
    \label{tab:gen_params}
    \begin{tabular}{cccc}
    \toprule
         \textbf{Split} & $n_{\text{hub}}$ & \(p_{\text{const}}\) & $p_{\text{out}}$\\
    \midrule
         3-hub & 2 & 0.6 & 0.95 \\ 
         4-hub & 3 & 0.8 & 0.97 \\ 
         5-hub & 4 & 1.0 & 0.99 \\
    \bottomrule
    \end{tabular}
\end{table}

\subsection{Query answer classification datasets}

We propose two benchmarks for query answer classification: FB15k-237-QAC and~\mbox{NELL-QAC}.
Instances in each dataset are stored in a unified form:
\[
(Q(x), C_Q, W_Q)
\]
where \(Q(x)\) is the input formula and \(C_Q, W_Q\) are subsets of \(V(G)\) with \(|C_Q| = |W_Q|\) such that:
\[
\Tilde{G} \models Q(a/x) \quad \forall a \in C_Q \qquad\qquad\text{and}\qquad\qquad
\Tilde{G} \not\models Q(b/x) \quad \forall b \in W_Q
\]

Each of our QAC benchmarks includes 9 splits, which can be broadly divided into two parts. Their statistics are more broadly described in \Cref{tab:qac_data_properties}.
\(\%\)easy, \(\%\)hard, and \(\%\)neg represent the proportions of easy answers, hard answers, and incorrect proposals in each split, respectively.

In the first part of our benchmarking, we utilize samples from existing CQA datasets, focusing exclusively on formulas that include projections. This choice is crucial, as grounding the free variables in non-projection queries (e.g., `2i', `3i', `2u', `2in', `3in') reduces the task to a set of independent link prediction problems, which do not meaningfully test reasoning capabilities beyond atomic fact retrieval.
Similarly, disjunctive queries (e.g., `up') can be decomposed into independent subqueries under the QAC setting, introducing little additional complexity and offering limited insight into a model's reasoning abilities.

We instead select a representative subset of six query types: `2p', `3p', `ip', `pi', `inp', and `pin', spanning key logical constructs such as projection, conjunction, and negation. This selection allows for both robust evaluation and continuity with prior work, enabling meaningful comparison with classical and neural CQA baselines under the classification-based objective. For each query type, we sample 500 queries to ensure a balanced and reliable evaluation.

For the main components of FB15k-237-QAC and NELL-QAC, we convert large base queries into QAC instances, reducing the size of each split to 300 queries.
These samples are characterized by significant structural complexity, presenting a substantial challenge for both existing and future query answering methods.

In both cases, the size \(|C_Q|=|W_Q|\) is chosen as \(\text{\textit{clip}}(|\{a\in V(G) : \Tilde{G}\models Q(a/x)\}|, 5, 10)\). \(W_Q\) is then sampled uniformly from the set of incorrect groundings for \(Q(x)\), while \(C_Q\) is drawn from the set of answers to \(Q(x)\), assigning non-trivial answers twice higher probability than the easy ones.

\begin{table}[t]
\centering
  \caption{Statistics of introduced QAC datasets.}
  \centering
  \begin{tabular}{lccccccccc}
    \toprule  
     & \textbf{2p} & \textbf{3p} & \textbf{pi} & \textbf{ip} & \textbf{inp} & \textbf{pin} & \textbf{3-hub} & \textbf{4-hub} & \textbf{5-hub} \\
    \midrule
    &  \multicolumn{9}{c}{\textbf{FB15k-237-QAC}} \\
    \midrule
    \textbf{\#queries} & 500 & 500 & 500 & 500 & 500 & 500 & 300 & 300 & 300  \\
    \textbf{\#answers} & 9818 & 9828 & 9632 & 9358 & 9808 & 9898 & 2036 & 1988 & 2028 \\
    \textbf{\(\%\)easy} & 26.5\% & 24.0\% & 27.5\% & 28.7\% & 35.8\% & 32.9\% & 18.7\% & 16.6\% & 17.0\%  \\
    \textbf{\(\%\)hard} & 23.5\% & 26.0\% & 22.5\% & 21.3\% & 14.2\% & 17.1\% & 31.3\% & 33.4\% & 33.0\%  \\
    \textbf{\(\%\)neg} & 50.0\% & 50.0\% & 50.0\% & 50.0\% & 50.0\% & 50.0\% & 50.0\% & 50.0\% & 50.0\%  \\
    \midrule
    & \multicolumn{9}{c}{\textbf{NELL-QAC}} \\
    \midrule
    \textbf{\#queries}  & 500 & 500 & 500 & 500 & 500 & 500 & 300 & 300 & 300 \\
    \textbf{\#answers}  & 9708  & 9702 & 9478 & 9694 & 9698 & 9888 & 2174 & 2186 & 1922 \\
    \textbf{\(\%\)easy} & 23.6\% & 22.6\% & 25.2\% & 23.6\% & 35.8\%& 32.8\% & 15.9\% & 14.4\% & 13.7\% \\
    \textbf{\(\%\)hard} & 26.4\% & 27.4\% & 24.8\% & 26.4\% & 14.2\% & 17.2\% & 34.1\% & 35.6\% & 36.3\% \\
    \textbf{\(\%\)neg}  & 50.0\% & 50.0\% & 50.0\% & 50.0\% & 50.0\% & 50.0\% & 50.0\% & 50.0\% & 50.0\% \\
    \bottomrule
  \end{tabular}
  \label{tab:qac_data_properties}
\end{table}
\begin{table}[t]
  \centering
  \caption{Statistics of introduced QAR datasets.}
  \begin{tabular}{lcccccc}
    \toprule  
    &  \multicolumn{3}{c}{\textbf{FB15k-237-QAR}} & \multicolumn{3}{c}{\textbf{NELL-QAR}}  \\
    \midrule
     & \textbf{3-hub} & \textbf{4-hub} & \textbf{5-hub} & \textbf{3-hub} & \textbf{4-hub} & \textbf{5-hub} \\
    \midrule
    \textbf{\#queries} & 1200 & 1200 & 1200 & 1000 & 1000 & 1000 \\
    \textbf{\#trivial} & 565 & 537 & 586 & 387 & 416 & 417 \\
    \textbf{\#free=1} & 400 & 400 & 400 & 400 & 400 & 400 \\
    \textbf{\#free=2} & 400 & 400 & 400 & 300 & 300 & 300 \\
    \textbf{\#free=3} & 400 & 400 & 400 & 300 & 300 & 300  \\
    \bottomrule
  \end{tabular}
  \label{tab:qar_data_properties}
\end{table}

\subsection{Query answer retrieval datasets}

Most samples in CQA benchmarks yield answers within the observable knowledge graph \(G\). 
Due to their simplicity, these instances are trivial for query answer retrieval, as classical solvers can efficiently derive the correct answers.
Consequently, we do not include such small queries in our FB15k-237-QAR and NELL-QAR datasets. Instead, we focus on addressing the limitations of current benchmarks by including more complex queries involving multiple free variables.

For the single free variable case, we select 400 base queries from each split. To generate formulas of arity 2, we randomly remove the quantification over one of the existentially quantified variables.
The~resulting query is then solved using an SQL engine, leveraging information from the initial answer set to optimize computation.
An analogous methodology is applied to extend the arity 2 formulas to instances with 3 free variables.
Statistics of the generated test splits are available in \Cref{tab:qar_data_properties}. \textbf{\#trivial} is the number of samples admitting a trivial answer, and \textbf{\#free=k} - arity \(k\) formulas.

\subsection{Evaluation protocol}
\label{app:dataset_evaluation_methodology}

\textbf{Query answer classification.} We use the F1-score as the metric to measure the performance on the task of query answer classification.
The reported F1-scores (\Cref{tab:qac_results}) are an average of F1-scores for single instances \((Q(x), C_Q, W_Q)\) taken over the whole dataset.
Formally, letting \(\mathcal{D}\) be the~considered dataset and denoting by \(A(\theta, Q)\) the set of entities from \(C_Q \cup W_Q\) marked by the~model~\(\theta\) as correct answers to \(Q(x)\), we report:
\[
F1_{\text{QAC}}(\theta) = \frac{1}{|\mathcal{D}|} \sum_{(Q(x), C_Q, W_Q) \in \mathcal{D}} \frac{2|A(\theta, Q) \cap C_Q|}{2|A(\theta, Q) \cap C_Q| + |A(\theta, Q) \backslash C_Q| + |W_Q \cap A(\theta, Q)|}
\]

\textbf{Query answer retrieval.} We adapt the F1-score metric to the task of QAR. In particular, we count a~positive outcome (i.e. solution prediction) as correct if and only if it is a true answer to the query.
Given a~model~\(\theta\), let \(\text{Rec}(\theta)\) be the proportion of \emph{correctly answered} positive instances in the dataset, while \(\text{Prec}(\theta)\) be the ratio of \emph{correctly answered} positive instances among the queries for which \(\theta\) predicted a solution. We then report:
\[
F1_\text{QAR}(\theta) = \frac{2}{\frac{1}{\text{Prec}(\theta)}+\frac{1}{\text{Rec}(\theta)}}
\]

\section{Link predictors}
\label{sec:used_complex_lp}
As mentioned in \Cref{sec:computational_graph}, we incorporate a link predictor into our architecture, to address the problem of deducing facts not presented in the observable knowledge graph. 
We consider three different model types from the existing CQA literature: transductive knowledge graph embedding method ComplEx \citep{trouillon2016complex} used in QTO and FIT, inductive (on nodes) method NBFNet \citep{nbfnet} employed by GNN-QE, and inductive (on nodes and relations) knowledge graph foundation model \textsc{Ultra} \citep{galkin2023ultra}, lying at the heart of \textsc{UltraQuery}.

\subsection{ComplEx}
\label{app:predictor_complex}
Recall that a ComplEx model \(\chi\) assigns each entity \(e \in V(G)\) and each relation \(r \in R(G)\), a~\(d_{\chi}\)-dimensional complex-valued vector \(v_e, w_r \in \mathbb{C}^{d_{\chi}}\).
We choose the hidden dimension of $d_{\chi}=1000$ for all experiments.
For each triple \((r,a,b) \in R(G) \times V(G) \times V(G)\), the score of the entities \(a,b\) being in relation \(r\) is derived as:
\begin{equation*}
    \chi (r,a,b)
    =   \Re e\left( \langle v_a, w_r, \overline{v}_b \rangle\right) 
    =  \Re e\left( \sum_{i=1}^{d_{\chi}} (v_a)_i (w_r)_i \overline{(v_b)_i}\right)
\end{equation*}

\paragraph{Training.}
\label{sec:lp_training}
For training, we follow the relation prediction methodology, presented in \cite{relation_prediction}, evaluating the~loss as a sum over all known facts \(r(a,b) \in E(G)\) of three cross-entropy losses, marginalizing the head, the relation and the tail:
\begin{equation*}
    \mathcal{L}_r(\chi) = -\sum_{r(a,b)\in E(G)}
    \Big( \log(p_{\chi,\tau}(a | r,b)) + \log(p_{\chi,\tau}(b | a,r)) + \lambda_{rel}\log(p_{\chi,\tau}(r | a,b))\Big) + \mathcal{L}_{reg}
\end{equation*}
where \(\mathcal{L}_{reg}\) is a nuclear 3-norm \cite{nuclear_norm} regularization term and the marginal probabilities are evaluated as:
\[
\mathcal{L}_{reg} = \sum_{i=1}^{d_{\chi}} \left(2\cdot\!\sum_{a\in V(G)} |(v_a)_i|^3 + \sum_{r\in R(G)} |(w_r)_i|^3\right)   
\]
\begin{equation*}
    p_{\chi,\tau}(a | r,b) = \frac{\exp(\tau\cdot\chi(r,a,b))}{\sum_{a'\in V(G)} \exp(\tau\cdot\chi(r,a',b))}
\end{equation*}
\[
    p_{\chi,\tau}(b | a,r) = \frac{\exp(\tau\cdot\chi(r,a,b))}{\sum_{b'\in V(G)} \exp(\tau\cdot\chi(r,a,b'))} 
\]
\[\\
    p_{\chi,\tau}(r | a,b) = \frac{\exp(\tau\cdot\chi(r,a,b))}{\sum_{r'\in R(G)} \exp(\tau\cdot\chi(r',a,b)))}\]
where \(\tau\) is a factor controlling the temperature of the applied softmax function. During training, we set \(\tau=1\).
For each dataset, the model is trained using the AdaGrad \cite{adagrad} optimizer with a learning rate $0.1$ for 500 epochs, and the checkpoint maximizing validation accuracy is chosen for testing.

\paragraph{Conversion to the probability domain.}
To match the definition of a link predictor from \Cref{sec:preliminaries}, the uncalibrated scores \(\chi(r,a,b)\) assigned by the ComplEx model \(\chi\) need to be converted into probabilities \(\rho_\mathbb{C}(r,a,b) = \sP(r(a,b)\in E(\Tilde{G}) | \chi) \).
We follow the ideas used in QTO \citep{qto} and FIT \cite{fit}, and set them as proportional to the marginal probabilities \(p_{\chi,\tau}(b|a,r)\). By definition, \(p_{\chi,\tau}(\cdot|a,r)\) defines a distribution over \(V(G)\):
\[
\sum_{b\in V(G)}  p_{\chi,\tau}(b|a,r) = 1
\]
Therefore, to match the objective:
\[
\sum_{b\in V(G)} \sP (r(a,b) \in E(\Tilde{G}) | \chi) = \left|\left\{b\in V(G) : r(a,b) \in E(\Tilde{G})\right\}\right|
\]
we multiply the marginal probabilities by a scaling factor \(Q_{a,r}\), specific to the pair \((a,r)\):
\[
\rho_\mathbb{C}(r,a,b) = \sP (r(a,b) \in E(\Tilde{G}) | \chi) = Q_{a,r}\cdot p_{\chi,\tau}(b|a,r)
\]
We consider two scaling schemes: \(Q^{\text{QTO}}_{a,r}\) introduced in QTO, and \(Q_{a,r}^{\text{FIT}}\) described by FIT.
Both methods base on the cardinality of the set \(E_{a,r} = \{b\in V(G) : r(a,b) \in E(G)\}\)  of trivial answers to the query \(Q(x) = r(a,x)\):
\[
\begin{aligned}
    Q^{\text{QTO}}_{a,r} &= \left| E_{a,r} \right| \\
    Q^{\text{FIT}}_{a,r} &= \frac{\left| E_{a,r} \right|}{\sum_{b \in E_{a,r}} p_{\chi,\tau}(b|a,r)} 
\end{aligned}
\]
During validation, we search for the best values for \(\tau\) among \([0.5,1,2,5,10,20]\) on each validation query type. We notice that \(\tau = 20\) performs best in all experiments.
The resulting link predictors \(\rho_\mathbb{C}^\text{FIT}\) and \(\rho_\mathbb{C}^\text{QTO}\), after augmenting them with links from the observable graphs as described below, are then plugged into the respective neuro-symbolic frameworks for QTO and FIT evaluations on small-query QAC splits. 
For experiments with \(\anycq\) equipped with ComplEx-based predictors, we~use the FIT, as it proved more accurate during validation.

\subsection{NBFNet}
\label{app:predictor_nbfnet}
As the second studied predictor, we consider Neural Bellman-Ford Network \citep{nbfnet}, constituting the main processing unit in GNN-QE \citep{gnn-qe}.
For the \(\anycq\) experiments, we reuse the NBFNet checkpoints obtained from training GNN-QE over the considered datasets.
We follow the configurations from the original repository -- models are trained for 10 epochs, processing 48,000 instances per epoch for the FB15k-237 training, and 24,000 samples per epoch for NELL, with Adam \citep{adam} optimizer with learning rate 0.005.
We validate 0.25 to be the optimal threshold for binarizing GNN-QE predictions, and apply it for the small-query QAC experiments.
When testing \(\anycq\) with the underlying NBFNet models, we first binarize the output of the NBFNet \(\nu\):
\[
\rho_\nu(r,a,b) = \begin{cases}
    1 & \text{if } \nu\big(r(a,b)\big) \geq t \\
    0 & \text{if } \nu\big(r(a,b)\big) < t
\end{cases}
\]
After validation, we set \(t = 0.5\) for the small-query FB15k-237-QAC splits, \(t=0.4\) for the small-query NELL-QAC splits and \(t = 0.6\)
for all large-query evaluations.

\subsection{\textsc{Ultra}}
\label{app:predictor_ultra}

Finally, to test \(\anycq\)'s ability of inductive link prediction over unseen relation, we consider \textsc{Ultra}~\citep{galkin2023ultra}, a prominent knowledge graph foundation models, as the third studied predictor for zero-shot inference.
For the \(\anycq\) experiments, we directly apply the 3g checkpoints from the original \textsc{Ultra} repository, which are pre-trained on FB15k-237~\citep{fb15k237}, WN18RR~\citep{Dettmers2018FB}, and CoDEx Medium~\citep{safavi-koutra-2020-codex} for 10 epochs with 80,000 steps per epoch, with AdamW~\citep{loshchilov2019decoupledweightdecayregularization} using learning rate of 0.0005.
Similarly to the methodology applied to NBFNet, we binarize the output of the \textsc{Ultra} model \(\upsilon\) when equipping it to \(\anycq\). 
In this case, following validation, we choose \(t=0.4\) as the best threshold for small query QAC experiments, and a higher \(t=0.9\) for formulas in large QAC and QAR splits.

As an additional baseline, we compare another state-of-the-art CQA method over unseen relation \textsc{UltraQuery}~\citep{galkin2024ultraquery}, which also utilizes \textsc{Ultra} as its link predictor.
We validate that 0.2 is the best answer classification threshold for \textsc{UltraQuery} checkpoints provided ni the original repository, trained only on the CQA benchmark based on FB15k-237. 
We highlight that the results of \textsc{UltraQuery} on NELL are hence \textit{zero-shot inference}, since NELL is not in the pretraining dataset of the evaluated checkpoints. 

\subsection{Incorporating the observable knowledge graph}
To account for the knowledge available in the observable graph \(G\), we augment all considered link predictors \(\rho\), setting \(\rho(r,a,b) = 1\) if \(r(a,b)\in E(G)\).
To distinguish between known and predicted connections, we clip the~predictor's estimations to the range \([0,0.9999]\). Combining all these steps together, given a predictor \(\rho\), in our experiments we use:
\[
\pi(r,a,b) = \begin{cases}
    1 & \text{if } r(a,b) \in E(G) \\
    \min\big(\rho(r, a, b), 0.9999\big) & \text{otherwise}
\end{cases}
\]
This methodology is applied for all \(\anycq\) experiments, to each of \(\rho_\mathbb{C}, \rho_\nu\) and \(\rho_\upsilon\), obtaining the final \(\pi_\mathbb{C}, \pi_\nu\) and \(\pi_\upsilon\), directly used for~ComplEx-based, NBFNet-based and \textsc{Ultra}-based evaluations, respectively.

\begin{table*}[t]
  \centering
  \small
    \caption{Average F1-scores of \anycq\: on the query answer classification task.}
  \begin{tabular}{ccccccccccc}
    \toprule  
     \textbf{Dataset} & \textbf{Predictor} & \textbf{2p} & \textbf{3p} & \textbf{pi} & \textbf{ip} & \textbf{inp} & \textbf{pin} & \textbf{3-hub} & \textbf{4-hub} & \textbf{5-hub} \\
    \midrule
    \multirow{3}{*}{\textbf{FB15k-237-QAC}}
    & {ComplEx} & 66.9 & 63.1 & 70.7 & 67.6 & \textbf{78.4} & 75.2 & 39.5 & 32.3 & 36.1 \\
    & {NBFNet} & \textbf{75.8} & \textbf{71.3} & \textbf{82.1} & \textbf{78.}8 & 76.7 & \textbf{75.7} & \textbf{52.4} & \textbf{49.9} & \textbf{51.9} \\
    & {\textsc{Ultra}} & 70.4 & 56.2 & 77.3 & 70.6 & 72.4 & 73.0 & 32.6 & 26.9 & 29.1 \\
    \midrule
    \multirow{3}{*}{\textbf{NELL-QAC}}
    & {ComplEx} & 63.8 & 64.0 & 68.2 & 61.7 & 74.8 & 75.0 & 39.1 & 40.0 & 34.9 \\
    & {NBFNet} & \textbf{76.2} & \textbf{72.3} & 79.0 & 75.4 & \textbf{76.7} & \textbf{75.3} & \textbf{57.2} & \textbf{52.6} & \textbf{58.2} \\
    & {\textsc{Ultra}} & 76.0 & 23.0 & \textbf{81.2} & \textbf{76.3} & 70.8 & 74.0 & 33.2 & 30.8 & 25.5 \\
    \bottomrule
  \end{tabular}
  \label{tab:qac_predictors_results}
\end{table*}

\begin{table}[t]
  \centering
  \scriptsize
  \caption{F1-scores of \(\anycq\) equipped with different predictors on the QAR datasets.
    }
  \setlength{\tabcolsep}{3pt}
  \begin{tabular}{cccccccccccccc}
    \toprule   
    \multirow{2}{*}{\textbf{Dataset}} & \multirow{2}{*}{\textbf{Predictor}}&  \multicolumn{4}{c}{\textbf{3-hub}} & \multicolumn{4}{c}{\textbf{4-hub}} & \multicolumn{4}{c}{\textbf{5-hub}} \\
    \cmidrule(lr){3-6}
    \cmidrule(lr){7-10}
    \cmidrule(lr){11-14}
     & & \(k\!=\!1\) & \(k\!=\!2\) & \(k\!=\!3\)& \textbf{total}& \(k\!=\!1\) & \(k\!=\!2\) & \(k\!=\!3\)& \textbf{total}& \(k\!=\!1\) & \(k\!=\!2\) & \(k\!=\!3\)& \textbf{total} \\
    \midrule
    \multirow{4}{*}{\textbf{FB15k-237-QAR}}
    &\textsc{SQL} & 65.8 & 46.2 & 17.8 & 45.7 & 59.9 & 50.2 & 33.7 & 48.7 & 60.6 & 49.3 & 42.5 & 51.2 \\
    & ComplEx &67.3 & 56.3 & 43.4 & 56.3 & 57.7 & \textbf{54.4} & 45.6 & 52.7 & 62.8 & 54.3 & \textbf{44.1} & 54.1 \\
    & NBFNet & \textbf{67.8} & \textbf{62.3} & \textbf{50.2} & \textbf{60.5} & \textbf{60.4} & 54.0 & \textbf{48.2} & \textbf{54.5} & \textbf{63.0} & \textbf{56.9} & 43.1 & \textbf{54.8} \\
    & \textsc{Ultra} & 65.3 & 57.1 & 44.1 &56.0 & 57.1 & 52.4 & 42.2 & 50.8 & 59.4 & 54.3 & 41.3 & 52.0 \\
    \midrule
\multirow{4}{*}{\textbf{NELL-QAR}}
    & \textsc{SQL} & 63.5 & 41.3 & 24.0 & 46.7 & 60.6 & 42.1 & 32.9 & 47.7 & 52.7 & 42.5 & 27.6 & 42.8 \\
    & ComplEx & 62.8 & 50.0 & 34.6 & 51.4 & 61.7 & 52.1 & 40.7 & 53.0 & 55.1 & 50.0 & 36.5 & 48.4 \\
    & NBFNet & \textbf{66.7} & \textbf{55.1} & \textbf{39.1} & \textbf{55.8} & \textbf{65.1} & \textbf{57.1} & \textbf{46.5} & \textbf{57.6} & \textbf{58.7} & \textbf{51.1} & \textbf{39.6} & \textbf{51.1} \\
    &\textsc{Ultra} & 57.4 & 44.6 & 31.5 & 46.5 & 56.4 & 43.8 & 35.2 & 46.7 & 49.1 & 40.4 & 31.5 & 41.5 \\
    \bottomrule
  \end{tabular}
  \label{tab:qar_different_predictors}
\end{table}

\subsection{Combination with \anycq}
\label{app:choice_of_predictor}
As mentioned in \Cref{sec:anycq_properties}, the \(\anycq\) framework can be equipped with any link predictor capable of predicting relations over the studied knowledge graph.
For this reason, to ensure that our choice matches the most accurate setup, we validate the performance of the predictors described in previous subsections, and test \(\anycq\) combined with ComplEx-based predictor with FIT scaling (\Cref{app:predictor_complex}), NBFNet (\Cref{app:predictor_nbfnet}) and \textsc{Ultra} (\Cref{app:predictor_ultra}).
Following validation, we choose NBFNet to be equipped for \(\anycq\) evaluations in all main experiments (\Cref{tab:qac_results} and \Cref{tab:qar_f1scores}).

As an additional ablation study, we generate the test results of the remaining combinations. The results on the QAC task are presented in \Cref{tab:qac_predictors_results}, while the scores on QAR benchmarks are shown in \Cref{tab:qar_different_predictors}.

For the small-query QAC splits, we observe that NBFNet and \textsc{Ultra} strongly outperform ComplEx on positive formulas (``2p'', ``3p'', ``ip'', ``pi''), while struggling more with queries involving negations (``inp'', ``pin''). The observed drop in \textsc{Ultra}’s results on the ``3p'' split is due to the model predicting too many links to be true, assigning high confidence scores to a large fraction of candidate triples. This behavior effectively expands the plausible search space, making it exponentially harder for the RL policy to converge to correct assignments. When too many entities are deemed likely, both local (\emph{LE}) and global (\emph{PE}) supervision signals become less informative, degrading the quality of guidance.

Overall, the performance of \anycq\ depends strongly on the characteristics of the underlying link predictor. As shown in Table~\ref{tab:qar_different_predictors}, predictors trained with classification objectives (e.g., NBFNet, \textsc{Ultra}) generally outperform embedding-based models such as ComplEx. However, in practice, predictors with higher precision and more selective scoring distributions provide stronger search guidance for \anycq. These results suggest that, for complex query answering, prioritizing precision over recall in link predictor training yields more stable and effective search behavior.

For large-query classification, NBFNet demonstrates the most robust performance, achieving F1 scores exceeding 50\% on nearly all splits, while the remaining predictors consistently stay below 40\%. The evaluations on the QAR benchmarks confirm this trend: NBFNet consistently outperforms both ComplEx and \textsc{Ultra}, achieving the best results on most FB15k-237 and all NELL splits.

\textbf{Transferability.} Interestingly, we again point out that the used \textsc{Ultra} model has not been trained on the NELL dataset.
Regardless, it manages to match the performance of the ComplEx-based predictor on NELL-QAC and NELL-QAR benchmarks.
Combining this observation with our ablation of transferability of \(\anycq\) models between datasets (\Cref{sec:ablation_studies}, \Cref{tab:transferability}), we can assume that similar results would be achieved when running the evaluation with \(\anycq\) model trained on FB15k-237.
Such framework would then answer queries over NELL in a true zero-shot, fully inductive setting.
In future work, we look forward to exploring combinations of \(\anycq\) search engines trained over broad, multi-dataset data, with fully inductive link predictors (like \textsc{Ultra}), to achieve foundation models capable of answering arbitrary queries over arbitrary, even unseen, knowledge graphs.

\section{Extended evaluation over QAR}
\label{app:extended_evaluation}

\begin{table}[t]
  \centering
  \scriptsize
  \caption{Average F1-scores on easy and hard QAR samples, where $k$ is the number of free variables.}
  \setlength{\tabcolsep}{4pt}
  \begin{tabular}{llcccccccccccc}
    \toprule  
    \multirow{2}{*}{\textbf{Dataset}} & \multirow{2}{*}{\textbf{Model}} & \multicolumn{4}{c}{\textbf{3-hub}} & \multicolumn{4}{c}{\textbf{4-hub}} & \multicolumn{4}{c}{\textbf{5-hub}} \\
    \cmidrule(lr){3-6}
    \cmidrule(lr){7-10}
    \cmidrule(lr){11-14}
     & & \(k\!=\!1\) & \(k\!=\!2\) & \(k\!=\!3\) & \textbf{total} & \(k\!=\!1\) & \(k\!=\!2\) & \(k\!=\!3\) & \textbf{total} & \(k\!=\!1\) & \(k\!=\!2\) & \(k\!=\!3\) & \textbf{total} \\
     \midrule
     \multicolumn{14}{c}{\textit{Easy instances}} \\
     \midrule
     \multirow{2}{*}{\textbf{FB15k-237-QAR}}
        & \textsc{SQL} & \textbf{94.0} & 75.9 & 41.8 & 77.1 & \textbf{93.2} & \textbf{83.5} & 68.9 & 83.7 & \textbf{92.1} & 77.7 & \textbf{76.1} & \textbf{82.7} \\
        & \(\anycq\)   & 90.8 & \textbf{91.4} & \textbf{83.4} & \textbf{89.2} & 88.3 & 83.1 & \textbf{83.8} & \textbf{85.2} & 85.0 & \textbf{80.2} & 68.7 & 78.8 \\
     \midrule
     \multirow{2}{*}{\textbf{NELL-QAR}}
        & \textsc{SQL} & \textbf{97.2} & 81.7 & 71.3 & 88.6 & \textbf{94.8} & 81.7 & 77.6 & 87.6 & 94.1 & 88.0 & 78.0 & 89.0 \\
        & \(\anycq\)   & 96.3 & \textbf{94.4} & \textbf{92.7} & \textbf{95.1} & 94.5 & \textbf{94.0} & \textbf{88.6} & \textbf{93.1} & \textbf{94.4} & \textbf{91.6} & \textbf{89.5} & \textbf{92.5} \\
     \midrule
     \multicolumn{14}{c}{\textit{Hard instances}} \\
     \midrule
     \multirow{2}{*}{\textbf{FB15k-237-QAR}}
        & \textsc{SQL} & 0 & 0 & 0 & 0 & 0 & 0 & 0 & 0 & 0 & 0 & 0 & 0 \\
        & \(\anycq\)   & \textbf{20.9} & \textbf{14.5} & \textbf{20.0} & \textbf{18.5} & \textbf{16.2} & \textbf{13.1} & \textbf{12.2} & \textbf{13.8} & \textbf{28.8} & \textbf{19.5} & \textbf{14.8} & \textbf{20.9} \\
     \midrule
     \multirow{2}{*}{\textbf{NELL-QAR}}
        & \textsc{SQL} & 0 & 0 & 0 & 0 & 0 & 0 & 0 & 0 & 0 & 0 & 0 & 0 \\
        & \(\anycq\)   & \textbf{13.1} & \textbf{13.1} & \textbf{7.7} & \textbf{11.1} & \textbf{17.7} & \textbf{16.8} & \textbf{15.2} & \textbf{16.5} & \textbf{16.8} & \textbf{15.0} & \textbf{11.0} & \textbf{14.3} \\
    \bottomrule
  \end{tabular}
  \label{tab:qar_results_split}
\end{table}

\textbf{Easy and hard F1-scores. }
To better understand the performance differences between \(\anycq\) and SQL, we conduct a split evaluation over two categories of queries: (i) those with answers present in the observable graph (easy instances) and (ii) those requiring inference beyond the observable graph (hard instances). Results are reported in \Cref{tab:qar_results_split}.

On easy queries with a single free variable, SQL outperforms \(\anycq\), though the margin is modest (\(\leq8\%\) relative).
As the number of free variables increases, \(\anycq\) consistently surpasses SQL.
While a classical solver could in principle answer these queries given unlimited time, our evaluation threshold of 60 seconds leads SQL to time out on a non-trivial subset.
This indicates that \(\anycq\) can efficiently handle queries that impose substantial computational costs on symbolic engines.

On hard queries with no observable answers, SQL fails entirely, yielding \(0\%\) F1 across all splits.
In contrast, \(\anycq\) uses its integrated link predictor to infer missing edges while reasoning over complex query structures, successfully recovering answers across all settings. These findings validate our claims about the efficiency and effectiveness of \(\anycq\): it not only enables prediction in incomplete knowledge graphs but also serves as a practical query answering engine in scenarios where classical solvers struggle.

\begin{figure}[t]
    \centering
    \includegraphics[scale=0.30]{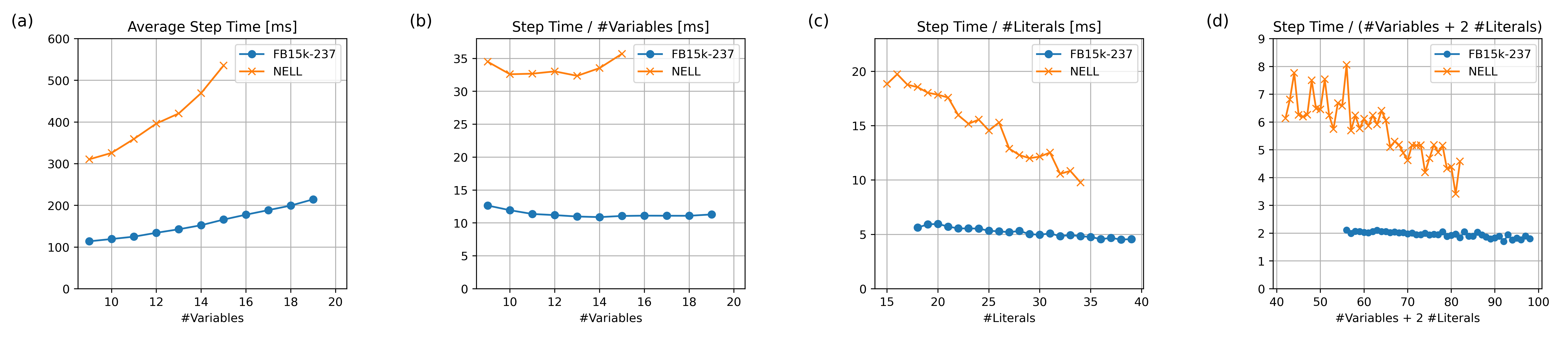}
    \caption{\(\anycq\) search step time analysis for queries of different complexities: a) average step time (AST) per the number of variables \(|\vec{y}|\), b) AST divided by the number of variables \(|\vec{y}|\), c) AST divided by the number of literals \(|Q|\), d) AST divided by \(|\vec{y}| + 2|Q|\), the complexity factor indicated by the theoretical analysis.}
    \label{fig:processing_times}
\end{figure}

\textbf{Computational complexity of \(\anycq\). }
Let \(Q = \exists \vec{y} . \Phi(\vec{y})\) be a conjunctive Boolean query over a knowledge graph \(G\).
Denote by \(|Q|\) the number of literals in \(Q\), let \(h\) be the maximum arity of~a~literal in \(Q\), and let \mbox{\(\vec{c} = (c_0, ..., c_s)\)} be the mentioned constants.
Then, the corresponding computational graph contains
\mbox{\((|V(G)| + 1) \cdot |\vec{y}| + 2 \cdot|\vec{c}| + |Q|\)}
vertices and at most \(|V(G)| \cdot (|\vec{y}| + h \cdot |Q|) + h\cdot |\vec{c}| \cdot |Q| \) edges.
Since \(\anycq\) processes this graph in linear time, the complexity of a single search step is:
\[
O \left(  |V(G)| \cdot (|\vec{y}| + h \cdot |Q|) + h\cdot |\vec{c}|\cdot |Q|  \right)
\]
Importantly, this complexity does not depend on the structure of the query graph of \(Q\) and scales only linearly with the sizes of the input formula and the KG \(G\).

We validate this linearity, evaluating the average \(\anycq\) search step time for queries of different sizes from the `3-hub' splits.
The results, presented in \Cref{fig:processing_times}, indicate that the empirical performance matches the theoretical analysis.
In particular, \Cref{fig:processing_times} b) and c) show that the processing time, divided by the number of variables \(|\vec{y}|\) or the number of literals \(|Q|\) in the input query, respectively, does not grow as the size of the query increases.
We even notice a slight decreasing trend, which we attribute to efficient GPU accelerations.
The difference between step times on FB15k-237-QAR and NELL-QAR remains consistent with the relative sizes of the underlying knowledge graphs.

\clearpage

\begin{table}[t]
  \centering
  \scriptsize
  \caption{Recall on the easy samples from the QAR datasets with different \textsc{SQL} timeouts.}
  \setlength{\tabcolsep}{4pt}
  \begin{tabular}{cccccccccccccc}
    \toprule   
    \multirow{2}{*}{\textbf{Model}} & \multirow{2}{*}{\textbf{Timeout [s]}} & \multicolumn{4}{c}{\textbf{3-hub}} & \multicolumn{4}{c}{\textbf{4-hub}} & \multicolumn{4}{c}{\textbf{5-hub}} \\
    \cmidrule(lr){3-6}
    \cmidrule(lr){7-10}
    \cmidrule(lr){11-14}
     & & \(k\!=\!1\) & \(k\!=\!2\) & \(k\!=\!3\) & \textbf{total} & \(k\!=\!1\) & \(k\!=\!2\) & \(k\!=\!3\) & \textbf{total} & \(k\!=\!1\) & \(k\!=\!2\) & \(k\!=\!3\) & \textbf{total} \\
     \midrule
     & & \multicolumn{12}{c}{\textbf{FB15k-237-QAR}} \\
     \midrule
        \(\anycq\) & 60   & 83.3 & \textbf{84.2} & \textbf{71.6} & \textbf{80.5} & 79.1 & 71.1 & \textbf{72.1} & 74.3 & 74.0 & \textbf{67.0} & 52.3 & 65.0 \\
    \midrule
        \multirow{3}{*}{\textsc{SQL}}
            & 30 & 82.8 & 51.5 & 20.3 & 55.6 & 71.9 & 49.7 & 34.4 & 53.6 & 74.0 & 51.0 & 43.8 & 56.8 \\
         & 60 & 88.7 & 61.2 & 26.4 & 62.8 & 87.2 & 71.7 & 52.6 & 71.9 & 85.3 & 63.6 & 61.4 & 70.5 \\
         & 120 & \textbf{93.2} & 69.4 & 35.8 & 69.9 & \textbf{90.8} & \textbf{76.5} & 55.2 & \textbf{75.8} & \textbf{88.7} & \textbf{67.0} & \textbf{67.6} & \textbf{74.7} \\
     \midrule
     & & \multicolumn{12}{c}{\textbf{NELL-QAR}} \\
    \midrule
        \(\anycq\) & 60  & 93.0 & \textbf{89.4} & \textbf{86.5} & \textbf{90.7} & 89.6 & \textbf{88.8} & \textbf{79.6} & \textbf{87.1} & 89.4 & \textbf{84.5} & \textbf{81.3} & \textbf{86.1} \\
    \midrule
        \multirow{3}{*}{\textsc{SQL}} & 30 & 88.5 & 57.5 & 33.8 & 69.0 & 82.4 & 57.8 & 55.9 & 69.2 & 80.1 & 63.1 & 46.7 & 67.5 \\
         & 60 & 94.5 & 69.0 & 55.4 & 79.6 & 90.1 & 69.0 & 63.4 & 77.9 & 88.8 & 78.6 & 64.0 & 80.2 \\
         & 120 & \textbf{95.5} & 73.5 & 58.1 & 81.9 & \textbf{93.8} & 71.5 & 68.8 & 81.6 & \textbf{91.3} & 82.5 & 65.4 & 82.9 \\
    \bottomrule
  \end{tabular}
  \label{tab:sql_time_ablation}
\end{table}

\paragraph{Experiments over impact of the timeout threshold}
\label{app:timeout_threshold}
To further understand the impact of the timeout threshold on SQL performance, we evaluate it on all QAR splits using three different time limits: 30, 60, and 120 seconds.
Results are reported in \Cref{tab:sql_time_ablation}.
We observe that the SQL engine’s recall on \emph{easy} queries improves marginally as the timeout increases, particularly for queries with multiple free variables.
However, its performance consistently deteriorates with increasing arity, even under extended time limits. In contrast, \(\anycq\) maintains strong performance across all splits and consistently outperforms SQL on retrieving easy answers to high-arity queries.

\end{document}